\documentclass{article}

\PassOptionsToPackage{numbers, sort&compress}{natbib} 

\usepackage[preprint]{neurips_2026}

\usepackage[utf8]{inputenc} 
\usepackage[T1]{fontenc}    
\usepackage{hyperref}       
\usepackage{url}            
\usepackage{booktabs}       
\usepackage{amsfonts}
\usepackage{amsmath} 
\usepackage{nicefrac}       
\usepackage{microtype}      
\usepackage{xcolor}         
\usepackage{pifont}
\usepackage{graphicx}
\usepackage{multicol}
\usepackage{multirow}
\usepackage{newfloat}
\usepackage{arydshln}
\usepackage{amsthm}
\usepackage{amssymb}
\usepackage{tikz}
\usepackage{todonotes}

\newsavebox{\normalnum}
\newsavebox{\boldnum}
\newcommand{\bfnum}[1]{%
  \sbox{\normalnum}{#1}%
  \sbox{\boldnum}{\textbf{#1}}%
  \raisebox{0pt}[\ht\normalnum][\dp\normalnum]{%
    \resizebox{\wd\normalnum}{!}{\usebox{\boldnum}}%
  }%
}

\usetikzlibrary{arrows.meta,positioning,fit,calc,decorations.pathreplacing}

\tikzset{
  >=Latex,
  io/.style={draw, rounded corners=2pt, align=center, minimum width=3.0cm, minimum height=0.95cm, inner sep=4pt},
  box/.style={draw, rounded corners=2pt, align=center, minimum width=4.3cm, minimum height=0.95cm, inner sep=4pt},
  pbox/.style={draw, rounded corners=2pt, align=center, minimum width=3.7cm, minimum height=0.95cm, inner sep=4pt},
  sum/.style={draw, circle, minimum size=8mm, inner sep=0pt, font=\large},
  note/.style={draw, rounded corners=3pt, align=left, inner sep=5pt, font=\footnotesize, text width=11.5cm},
  conn/.style={-Latex, line width=0.9pt},
  lab/.style={font=\footnotesize\bfseries, draw, rounded corners=2pt, fill=white, inner sep=2pt},
  klab/.style={font=\scriptsize}
}

\definecolor{blueA}{RGB}{38,84,124}
\definecolor{blueB}{RGB}{90,140,191}
\definecolor{blueC}{RGB}{220,233,246}
\definecolor{grayA}{RGB}{84,92,100}
\definecolor{grayB}{RGB}{235,238,242}
\definecolor{redA}{RGB}{179,68,68}
\definecolor{goldA}{RGB}{214,154,56}

\tikzset{
  >=Latex,
  panel/.style={draw=grayA!70, rounded corners=3pt, line width=0.6pt, fill=white},
  paneltitle/.style={font=\bfseries\footnotesize, anchor=west},
  subtitle/.style={font=\scriptsize, text=grayA},
  flow/.style={-Latex, line width=0.9pt, draw=blueA},
  block/.style={draw=blueA!80, rounded corners=2pt, line width=0.8pt, fill=blueC, align=center, inner sep=4pt},
  smallblock/.style={draw=blueA!80, rounded corners=2pt, line width=0.7pt, fill=blueC!65, align=center, inner sep=3pt},
  badge/.style={draw=grayA!70, rounded corners=2pt, line width=0.6pt, fill=grayB, align=center, inner sep=4pt},
  sumdot/.style={draw=blueA!85, circle, fill=white, line width=0.8pt, minimum size=7mm, inner sep=0pt},
  obsdot/.style={circle, fill=black, inner sep=0pt, minimum size=1.6pt},
  gtbox/.style={draw=grayA!70, rounded corners=2pt, fill=grayB!45},
  atomlabel/.style={font=\scriptsize\bfseries, text=blueA},
  tinylabel/.style={font=\scriptsize, text=grayA},
}

\newtheorem{theorem}{Theorem}

\newtheorem{lemma}[theorem]{Lemma}
\theoremstyle{definition}

\theoremstyle{remark}

\newcommand{\xmark}{\ding{55}}
\newcommand{\gt}{{\mathrm{gt}}}
\newcommand{\B}{\boldsymbol{B}}
\newcommand{\E}{\boldsymbol{E}}
\newcommand{\x}{\boldsymbol{x}}
\newcommand{\z}{\boldsymbol{z}}
\newcommand{\Z}{\boldsymbol{Z}}
\newcommand{\w}{\boldsymbol{w}}
\newcommand{\p}{\boldsymbol{p}}
\newcommand{\n}{\boldsymbol{n}}

\newcommand{\R}{\mathbb{R}}

\def\methodname{\textit{FLASH-MAX}}

\title{Fast Reconstruction of Exact Maxwell Dynamics \\from Sparse Data}

\author{%
  Dan DeGenaro\textsuperscript{1} \\
  \And
  Xin Li\textsuperscript{2} \\
  \And
  Obed Amo\textsuperscript{3} \\
  \AND
  Michael Pokojovy\textsuperscript{3,4} \\
  \And
  Sarah Adel Bargal\textsuperscript{1} \\
  \And
  Markus Lange-Hegermann\textsuperscript{5} \\
  \And
  Bogdan Rai\c{t}\u{a}\textsuperscript{2}$^*$ \\
  [0.5em]
  \textsuperscript{1}Department of Computer Science, Georgetown University \\
  \texttt{\{drd92, sarah.bargal\}@georgetown.edu} \\
  \textsuperscript{2}Department of Mathematics, Georgetown University \\
  \texttt{\{xl572, bogdan.raita\}@georgetown.edu} \\
  \textsuperscript{3}Department of Mathematics and Statistics, Old Dominion University \\
  \texttt{\{oamo, mpokojovy\}@odu.edu} \\
  \textsuperscript{4}School of Data Science, Old Dominion University \\
  \textsuperscript{5}Institute Industrial IT, Department of Computer Science and Automation,\\
  OWL University of Applied Sciences and Arts \\
  \texttt{markus.lange-hegermann@th-owl.de}
}

\begin{document}

\maketitle

\begin{abstract}
We introduce \methodname{}, a shallow, exact-by-construction neural network architecture for predicting homogeneous electromagnetic fields from sparse pointwise observations. Each hidden neuron represents a separate exact solution
to Maxwell's equations, so that the network satisfies the governing equations 
symbolically by construction and can be trained end-to-end from sparse data within seconds. We prove a universal approximation
result showing that this exact model class remains universal on arbitrary domains.
\methodname{} reaches sub-1\% relative validation error from about 1K
sparse pointwise observations in seconds, all while maintaining a zero PDE residual, and keeps single-digit errors even for only 100 observations sampled from 3D space.
These results suggest that moving governing structure from the loss into the hypothesis class can dramatically improve the trade-off between precision and optimization speed in scientific machine learning. 
\end{abstract}

%

\section{Introduction}
\label{Introduction}

Maxwell's equations are a system of eight partial differential equations (PDEs) that describe the electromagnetic interaction in classical physics. Because Maxwell’s equations govern electromagnetic wave propagation, reconstructing a full spatiotemporal field from sparse pointwise observations is a basic task in applications such as antenna measurement, electromagnetic compatibility diagnostics, RF sensing, subsurface exploration, tomography, and fusion plasma diagnostics. In such settings, the available data do not define a fully specified forward problem, so the task is not standard forward simulation but rather  reconstruction of fields from sparse observations. We build the structure of Maxwell's equations directly into our model architecture and show that this can make learning both more efficient and more accurate, as opposed to enforcing admissibility approximately through the loss. In the sparse-reconstruction benchmarks considered here, the resulting training procedure reaches the target validation accuracy orders of magnitude faster than the reference implementations.

We propose \methodname{}, a Fast-Learning Accurate Solver for the Homogeneous MAXwell equations. \methodname{} is a \emph{shallow neural network whose hidden neurons are themselves exact solutions of Maxwell's equations}, enabled by a suitable weight-sharing parameterization. This ensures that the field predicted by this network remains consistent with Maxwell's equations throughout training and inference. Hence, \methodname{} builds the governing structure of Maxwell's equations directly into the model architecture rather than relying on the loss to enforce approximate physical validity.

Among learning-based approaches, Physics-Informed Neural Networks (PINNs) are the closest comparison family, since they also fit PDE-constrained models directly from data while enforcing governing equations during optimization~\cite{raissi2019physics}. Ehrenpreis-Palamodov Gaussian Processes (EPGPs) are also conceptually relevant because they embed field structure directly into a Gaussian Process covariance model~\cite{harkonen2023gaussian}. Numerical solvers and neural operators both assume a fully specified/well-posed problem and are hence not well-suited for the sparse-reconstruction setting studied here.

The exact model class remains expressive for the homogeneous system: Theorem~\ref{thm:spacetimeUAT} shows that it is universal on arbitrary domains. So exact hidden-unit admissibility need not force the model into an overly rigid approximation class. The homogeneous Maxwell setting therefore gives a mathematically explicit regime in which exact solution-class parameterizations remain expressive and support very rapid sparse-data reconstruction of electromagnetic fields.

\begin{figure}
\includegraphics[width=\linewidth]{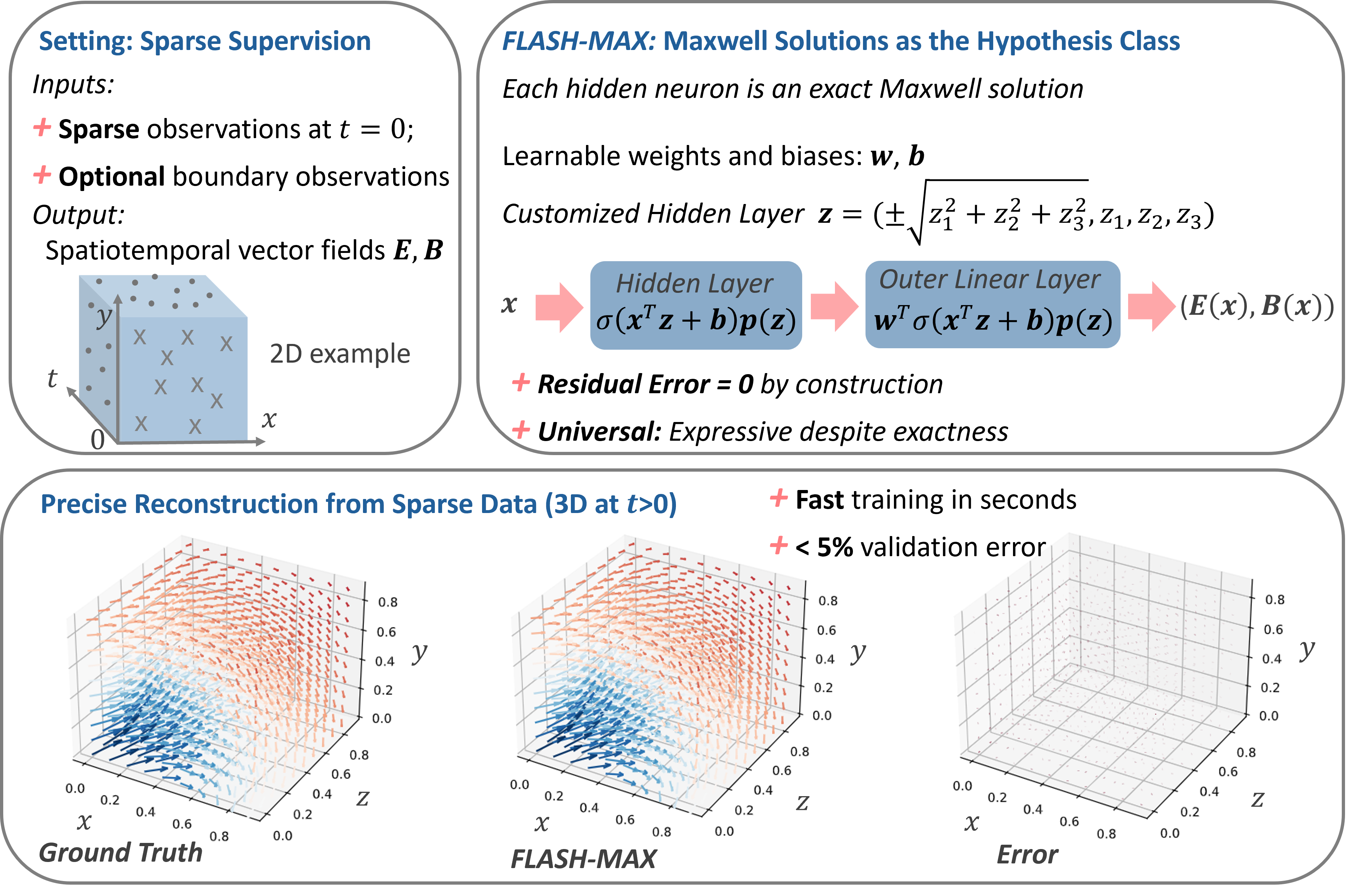}
 \caption{\textbf{Sparse-data Maxwell Reconstruction with Exact Hidden-unit Structure.} From sparse pointwise observations of a single field instance, \methodname{} fits a shallow trainable architecture whose hidden neurons are exact Maxwell solutions, so the predicted field solves Maxwell's system throughout optimization. The resulting model is exact by construction, expressive on arbitrary domains, and yields accurate reconstruction from sparse data. Exactness is ensured by using computer algebra to obtain the hidden layer restriction $z_0^2=z_1^2+z_2^2+z_3^2$ and the polynomials $\boldsymbol p(\boldsymbol{z})$ which tie the vectorial weights. Top-left: Sparse data sampled only at $t=0$ (``$\times$'' symbols), showing that our model is predictive; data can also be sampled on the boundary of the validation domain (``$\bullet$'' symbols). Bottom: Quiver plot visualization of our fast prediction of an electromagnetic knot (Hopfion) at time $t=0.5$. The difference to ground truth is indistinguishable.
 } 
    \label{fig:figure1}
\end{figure}

Empirically, \methodname{} attains sub-1\% relative validation error from about 1K sparse observations in a matter of seconds on representative and diverse sparse-reconstruction tasks while maintaining a zero PDE residual by construction (Table~\ref{tab:race_to_5percent} and Figure~\ref{fig:timing_bc}). Compared with residual-based PINN training, this threshold is reached substantially sooner and more robustly. These results suggest that exact solution-class parameterizations can change the trade-off between physical validity, expressive power, and optimization speed. 
We summarize our contributions as follows:
\begin{itemize}
    \item An exact-by-construction shallow neural architecture for reconstruction of solutions to the homogeneous Maxwell equations from sparse observations, with hidden units that are themselves exact solutions to Maxwell's equations (Figure~\ref{fig:figure1} and Section~\ref{method});
    \item A universality theorem in general domains for this exact model class (Theorem~\ref{thm:spacetimeUAT});
    \item Empirical evidence of fast sparse-data reconstruction with sub-1\% error and  zero PDE residual from 1K data points, and single-digit error from only 100 data points (Section~\ref{experiments});
    \item Empirical and conceptual evidence that moving governing structure from the loss into the hypothesis class can improve the trade-off between physical admissibility and time to useful accuracy when the admissible geometry is explicit.
\end{itemize}

\section{Method}
\label{method}

We first motivate the idea behind \methodname{} as it might be applied to a much simpler linear PDE, the homogeneous transport equation in one spatial dimension.
If we write $f(t,x)=\sigma(z_0t+z_1x+b)$ for a typical neuron in a 2D shallow neural network, then the transport equation yields
\begin{align*}
  0= \partial_tf(t,x)-\partial_xf(t,x)
    &=z_0\cdot \sigma'(z_0t+z_1x+b)-z_1\cdot \sigma'(z_0t+z_1x+b)\\
    &=(z_0-z_1)\cdot \sigma'(z_0t+z_1x+b).
\end{align*}
Hence, $f(t,x)=\sigma(z_0t+z_1x+b)$ is a solution whenever $z_0=z_1$. Thus, in this simple case, the PDE can be enforced exactly by imposing an algebraic constraint on the weights in the activation $\sigma$.

We now extend this idea in several directions, such that it is applicable to electromagnetic phenomena described by Maxwell's equations.

\begin{enumerate}
    \item The domain is four-dimensional, hence we consider $\sigma(z_0t+z_1x+z_2y+z_3z+b)$.
    \item\label{itm:char_variety} Instead of the linear relation between  $z_0$ and $z_1$, we get the quadratic relation\footnote{This
relation can also be explained via the connection between Maxwell's equations
and the wave equation; equivalently, it is the characteristic variety of both the (1+3)D wave equation and the
Maxwell system.}\\ $z_0^2=z_1^2+z_2^2+z_3^2$, which we build into our architecture as a weight-sharing constraint.
    \item We use the linearity of the PDE system to write $f$ as a linear combination: \\ $\sum_kw_k\sigma(z_{k0}t+z_{k1}x+z_{k2}y+z_{k3}z+b_k)$.
    \item To describe electric and magnetic \textit{vector} fields, we provide two Noetherian multipliers $\p_i$, $i\in\{1,2\}$, one of which is introduced into each summand of the linear combination.
    \item We view this as a shallow neural network, where we train the parameters $w_k,\z_k,b_k$ (subject to the relation from item \ref{itm:char_variety} above) on data.
\end{enumerate}


More formally, let $\x=(t,x,y,z)\in\R^4$ be the coordinates of spacetime. We consider, respectively, electric and magnetic fields $\E,\B:\R^4\to\R^3$, with $\B$ sometimes denoted by $\boldsymbol{H}$ in the literature, satisfying the homogeneous Maxwell equations
\begin{equation}\tag{MAX}\label{eq:maxwell}
    \partial_t \E-\nabla \times \B = \boldsymbol0,
    \quad  
    \partial_t\B+ \nabla\times \E = \boldsymbol0,\quad\nabla \cdot \E = 0,\quad
     \nabla\cdot \B = 0.
\end{equation}

We construct an efficient solver for Initial and Initial-Boundary Value Problems using the ansatz:
\begin{align}\tag{\methodname}\label{eq:model}
    (\E(\x),\B(\x))=\sum_{i=1}^2\sum_{j=\pm1}\sum_{k=1}^W w_{ijk}\sigma(\x^\top \z_{ijk}+ b_{ijk})\p_i(\z_{ijk})
\end{align}
which is a shallow neural network of width $W$, customized as follows:

\begin{enumerate}
    \item In the frequency vectors $\z_{ijk}=(z_{ijk0},z_{ijk1},z_{ijk2},z_{ijk3})\in \mathbb{R}^4$, the time weight $z_{ijk0}$ of the time variable is tied to the spatial weights by the algebraic formula 
    \begin{equation}\label{eq:char_var}
        z_{ijk0}=j\sqrt{z_{ijk1}^2+z_{ijk2}^2+z_{ijk3}^2}\quad\text{where $j=\pm1$}. 
    \end{equation}

    \item The $\p_{i}\colon \R^4 \to \R^6$ are Noetherian multipliers~\cite{CCHKL,oberst1999construction,damiano2007computational,nabeshima2022effective} indexed by $i=1,2$ in~\eqref{eq:model}, computed via the computer algebra system Macaulay2~\cite{macaulay2,manssour21linear,cidruiz2021primary,chen22primary,homs21primary,chen2022noetherian,cidruiz2021noetherian} as
    \begin{align}\label{eq:noetherian_multipliers}
       \begin{split}
            &\p_1(\z)=(- z_1z_3, -z_2z_3,z_0^2-z_3^2,-z_0z_2,z_0z_1,{0})\\
            &\p_2(\z)=(z_1z_2,-z_0^2+z_2^2,z_2z_3,-z_0z_3,{0},z_0z_1).
        \end{split}
    \end{align}

    \item The real numbers $b_{ijk}$ are biases and the activation function $\sigma:\R\to\R$ is Lipschitz continuous and not a polynomial. In our experiments, we use the activation function $\sigma=\tanh$ and view $z_{ijk1}$, $z_{ijk2}$, $z_{ijk3}$, ${w}_{ijk}$, and $b_{ijk}$ as learnable real parameters for all $i,j,k$ that can be trained by stochastic gradient descent methods. 
\end{enumerate}

Since the ansatz in~\eqref{eq:model} is obtained by means of computational algebra relying on the Ehrenpreis--Palamodov theorem \cite{EHRENPREIS,PALAMODOV,HORMANDER,BJORK}, we can infer that
every expression~\eqref{eq:model} satisfies Maxwell's equations~\eqref{eq:maxwell}, see Lemma~\ref{lem:solution} in Appendix~\ref{sec:proof_uat}. It is also worth noting that \methodname{} only requires a few hyperparameters. For additional details on \methodname, see Appendix~\ref{app:detailed_flashmax_description}.

A natural concern with hard architectural constraints is the loss of expressive power; if every neuron must satisfy Maxwell's equations exactly, the resulting hypothesis class could be too rigid for sparse-data fitting. The next theorem addresses this concern by proving that every solution of the homogeneous Maxwell equations can be approximated locally by \methodname.

\begin{theorem}[Universal approximation for Maxwell's equations]
\label{thm:spacetimeUAT}
Let $\sigma\colon\R\to\R$ be locally Lipschitz and non-polynomial, $t_1<t_2$, and $\Omega\subset\R^3$ be a bounded open set.
Let $(\E_\gt,\B_\gt)$ be an $L^2$-solution of the homogeneous Maxwell equations~\eqref{eq:maxwell} on $\R^{4}$.
Then, for every $\varepsilon>0$, there exists a finite Maxwell network solution $(\E,\B)$ of the form~\eqref{eq:model} such that
\[
\|( \E_\gt, \B_\gt)-(\E,\B)\|_{L^2((t_1,t_2)\times\Omega)}<\varepsilon.
\]
\end{theorem}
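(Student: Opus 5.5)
The argument has two stages: first represent $(\E_\gt,\B_\gt)$ as a superposition of plane waves carried by the multipliers $\p_i$ (an Ehrenpreis--Palamodov representation), then discretize that superposition and replace each exponential or ridge profile by a finite combination of $\sigma$-neurons on the compact set $K=[t_1,t_2]\times\overline\Omega$.

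\textbf{Step 1: Fourier representation.} An $L^2$ solution on $\R^4$ should be read as a tempered solution, and its spacetime Fourier transform is supported on the light cone $\{\xi_0^2=|\xi'|^2\}$. It is more convenient to work with the initial data $(\E_\gt,\B_\gt)(0,\cdot)\in L^2(\R^3)$, which are divergence free, and to solve in time. For each spatial frequency $\xi'\neq0$, the divergence-free subspace of $\mathbb{C}^6$ splits into the $\pm|\xi'|$ eigenspaces of the Maxwell symbol. At $\z=(\pm|\xi'|,\xi')$ the vectors $\p_1(\z),\p_2(\z)$ lie in the kernel of the symbol. This is the content of Lemma~\ref{lem:solution}, which I take as given. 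On the open dense set where they are linearly independent, they span the relevant 2-dimensional eigenspace.

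This yields a representation
\[
(\E_\gt,\B_\gt)(\x)=\sum_{i,j}\int_{\R^3} a_{ij}(\xi')\,e^{\mathrm{i}\,\x^\top\z_j(\xi')}\,\p_i(\z_j(\xi'))\,d\xi',\qquad \z_j(\xi')=(j|\xi'|,\xi').
\]
Here the $a_{ij}$ are measurable but may be singular where the $\p_i$ degenerate.

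Next, truncate in frequency. Cut $a_{ij}$ off to a compact set $A$ avoiding $\xi'=0$ and the degeneracy locus of $\{\p_1,\p_2\}$ (for instance $z_3=0$ or $z_2=0$, together with the zeros of the relevant $2\times2$ minors). By Plancherel and energy conservation, the error on $K$ is at most $\sqrt{t_2-t_1}$ times the $L^2(\R^3)$ norm of the discarded data. This tends to zero because the removed set has measure tending to zero. On $A$ the coefficients $a_{ij}$ are bounded in $L^2$, and after a further approximation we may take them continuous.

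\textbf{Step 2: discretize.} The integrand is now continuous on the compact set $A$ and smooth in $\x\in K$. A Riemann sum therefore approximates it uniformly on $K$ by finitely many terms $c\,e^{\mathrm{i}\x^\top\z}\p_i(\z)$ with $\z$ on the cone. Taking real parts, these become terms of the form $\cos$ and $\sin$ of $\x^\top\z$ times $\p_i(\z)$, since $\p_i$ has real coefficients. Each such term is $g(\x^\top\z)\p_i(\z)$ with $g$ a one-dimensional function, which is exactly the shape of one summand of~\eqref{eq:model} with $\sigma$ replaced by $g$.

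\textbf{Step 3: one-dimensional reduction.} The remaining task is the classical univariate universality fact. For $\sigma$ locally Lipschitz and non-polynomial, finite combinations $\sum_m w_m\sigma(\lambda_m s+\beta_m)$ approximate any continuous $g$ uniformly on a compact interval (Leshno--Lin--Pinkus--Schocken).

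The weight $\z$ must stay fixed, so the scaling $\lambda_m$ cannot be absorbed into it directly. Instead use $\z_m=\lambda_m\z$, which still lies on the cone because the cone is homogeneous. The multipliers are homogeneous of degree $2$, so $\p_i(\lambda_m\z)=\lambda_m^2\p_i(\z)$, and the factor $\lambda_m^{-2}$ goes into $w_m$ (taking $\lambda_m\neq0$). Negative $\lambda$ flips the sign $j$, which is still allowed since both signs appear in~\eqref{eq:model}. The interval needed is $\{\x^\top\z:\x\in K\}$, which is bounded. Uniform convergence on $K$ then gives $L^2(K)$ convergence, and the three errors add to less than $\varepsilon$.

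\textbf{Main obstacle.} Step 1 is the hard part. One must make the Ehrenpreis--Palamodov representation rigorous for $L^2$ solutions with these specific multipliers. In particular, one must show that $\p_1,\p_2$ span the solution fiber on a set of full measure, and that removing a neighborhood of the degeneracy locus costs arbitrarily little energy. I would first verify the spanning claim by a direct rank computation of the $6\times2$ matrix $[\p_1\ \p_2]$ on the cone. Then I would use dominated convergence on the Fourier side.
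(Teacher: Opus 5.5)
Your argument is correct. Its algebraic core matches the paper's proof in two places. First, the four vectors $\p_i(\pm|\xi|,\xi)$ span the divergence-free fiber exactly when $\xi_1\neq0$; this is the claim $\mathrm{im}\,P(\xi)=\ker R(\xi)$ in Step~3 of the proof of Theorem~\ref{thm:UAP_IC}. Second, a general $\sigma$ is reached from a trigonometric profile via the homogeneity $\p_i(\lambda\z)=\lambda^2\p_i(\z)$ (Step~4), with negative $\lambda$ absorbed into the sign $j$.

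The analytic frame is genuinely different. The paper localizes first. It extends the data on $\Omega_T$ to compactly supported divergence-free fields on a cube and expands them in a Fourier \emph{series}. Lattice frequencies with $k_1=0$ can carry nonzero coefficients that cannot be dropped, so it must tilt them by a small $\delta$. It then approximates only the slice $t=0$ and transfers that error to $(0,T)\times\Omega$ by an energy estimate plus finite speed of propagation.

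You instead take the Fourier \emph{transform} of global finite-energy data. There $\{\xi_1=0\}\cup\{0\}$ is Lebesgue-null and can simply be excised. Since each $e^{\mathrm{i}\x^\top\z}\p_i(\z)$ is an exact global solution, you only apply energy conservation to honest finite-energy truncation errors. The Riemann-sum and $\sigma$-replacement errors are controlled uniformly on the spacetime compact $K$ directly.

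What each route buys:
\begin{itemize}
\item Yours avoids the extension lemma. It also avoids any energy estimate for the network itself, which is not in $L^2(\R^3)$; when the paper applies Step~5 to the network difference, it has to be read as the local energy inequality on truncated cones.
\item The paper's route proves the stronger initial-value statement, Theorem~\ref{thm:UAP_IC}: data known only on $\Omega_T$, approximated on a single time slice, control the spacetime error. This is what underwrites training at $t=0$. Your route needs globally $L^2$ data, or the paper's localization step first.
\end{itemize}

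Your reinterpretation of the hypothesis is in fact forced. A distributional solution in $L^2(\R^4)$ has Fourier transform supported on the null light cone, and hence vanishes. The finite-energy reading is the meaningful one, and the paper also only ever uses an $L^2(\R^3)$ time slice obtained by Fubini.

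Three small repairs are needed.
\begin{itemize}
\item The degeneracy locus is exactly $\{\xi_1=0\}$, not $\{z_2=0\}$ or $\{z_3=0\}$. For a fixed sign, the $\B$-part is $-z_0^{-1}\xi\times\E$. The $\E$-parts of $\p_1,\p_2$ are $|\xi|^2$ times the orthogonal projections of $e_3$ and $-e_2$ onto $\xi^\perp$. These are dependent if and only if $\xi\in\mathrm{span}(e_2,e_3)$.
\item Discarding $|\xi'|>R$ does not remove a set of small measure. The truncation should therefore be argued by dominated convergence, as you indicate at the end.
\item The restriction $\lambda_m\neq0$ needs one line: a constant term $w\,\sigma(\beta)$ is the uniform limit on compact intervals of $w\,\sigma(\lambda s+\beta)$ as $\lambda\to0$.
\end{itemize}
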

The approximation holds true with mild restrictions on $\sigma$ even after replacing the space $L^2$ with spaces of continuous and smooth functions ($C^0,\,C^m,\,C^\infty)$ or Sobolev spaces $H^m$ for $m>0$. This theorem and its extensions are proved and explored in detail in Appendix~\ref{sec:proof_uat}.
The proof shows that \methodname{} can approximate the ground truth $(\E_\gt(t_1,\cdot),\B_\gt(t_1,\cdot))$ at initial time $t=t_1$ with arbitrarily small $L^2$-norm. It further establishes that the error at $t=t_1$ controls the spacetime $L^2$-error in the time span $(t_1,t_2)$ as a consequence of the finite speed of propagation. For solving initial value problems this implies that reaching small $L^2$-error in \textit{training} at the single time $t=t_1$ implies achieving $L^2$-error for \textit{validation} in the time span $(t_1,t_2)$ of the same order of magnitude.

\section{Related Work}
\label{RelatedWork}


From an ML perspective, \methodname{} is related both to coordinate-based implicit neural representations such as SIREN and Fourier-feature networks~\cite{sitzmann2020siren,tancik2020fourier} and to random-feature or extreme-learning-machine models~\cite{rahimi2007random,pao1994rvfl,huang2006extreme}; unlike both, the trainable features of \methodname{} are constrained to lie in the  solution space of Maxwell's equations.


\textbf{Physics-Informed Neural Networks (PINNs)} solve PDEs mesh-free by embedding governing equations, initial and boundary conditions directly into the loss function. Since the original paper \cite{raissi2019physics}, many enhancements have been proposed to address spectral bias, optimization instability, and slow convergence, including Fourier-based embeddings, strict boundary enforcement, adaptive temporal weighting, domain decomposition, and improved optimization strategies \cite{rahaman2019spectral,tancik2020fourier,wang2021gradient,krishnapriyan2021failure,zhang2021maxwell,roy2024exact,piao2024domain,guo2024tcas}.
Prior PINN electromagnetic studies can be broadly grouped into three classes. First, several works focus on steady-state, frequency-domain, or inverse Maxwell problems, such as magnetostatics, photonic inverse design, and parameter retrieval of electromagnetic material properties \cite{zhang2025electromagnetic,chen2022physics,deng2025physics}. Second, \cite{nohra2024physics,nohra2024approximating,piao2024domain} address heterogeneous media, where interface conditions and material jumps are critical, using strategies such as first-order Maxwell formulations, interface-aware inputs, and domain decomposition. Third, more recent studies  investigate unsteady Maxwell equations, showing that convergence-enhancing strategies such as random Fourier features, strict periodicity enforcement, and causality-aware training can substantially affect performance (\cite{zhang2021maxwell, shaviner2025pinns}).

Beyond classical PINNs, QPINN introduces a hybrid quantum-classical PINN for two-dimensional time-dependent Maxwell equations with additional physical constraints~\cite{chen2026quantum}, while Evo-PINN explores evolutionary and gradient-free optimization to improve PINN training robustness~\cite{11353124}. Despite these advances, existing work remains largely focused on 1D or 2D settings, steady-state or inverse formulations, or specially structured heterogeneous-media problems.
We therefore construct a strong in-house PINN baseline for \eqref{eq:maxwell}, augmenting the original PINN framework~\cite{raissi2019physics} with additional physics-informed constraints such as divergence-free regularization and boundary-condition enforcement. We also compare against adapted QPINN and Evo-PINN baselines in Appendix~\ref{app:pinn_baseline}; since the in-house PINN performs strongest, it is used as the primary PINN comparator in the main text.

Unlike residual-based PINNs, \methodname{} builds Maxwell structure directly into the hypothesis class: every output satisfies the homogeneous Maxwell system by construction, and training only fits sparse observations within this exact class. Thus, its modeled PDE residual vanishes symbolically, whereas a PINN residual is minimized approximately. In all reported examples, \methodname{} reaches the target validation error at least one order of magnitude faster than the PINN baseline.

\textbf{Numerical Methods.}
The Finite Element Method (FEM) for linear Maxwell equations has evolved well beyond standard $H(\mathrm{curl})$-conforming N\'ed\'elec elements~\cite{nedelec1980mixed}, addressing challenges such as spurious modes, material discontinuities, and high-frequency pollution. Recent developments include Discontinuous Galerkin frameworks, in particular IPDG and HDG methods~\cite{nguyen2011hdg}, which provide flexible mesh refinement and local conservation, as well as Finite Element Exterior Calculus (FEEC)~\cite{arnold2006finite}, which preserves the de Rham structure and the associated divergence constraints. For high-frequency time-harmonic Maxwell equations, preconditioners and multiscale methods~\cite{engquist2011sweeping} improve iterative solver efficiency, while in the time domain, structure-preserving integrators combined with adaptive mesh refinement~\cite{bangerth2003adaptive} support high-fidelity electromagnetic interference and scattering simulations.
Compared to alternative techniques like the Finite-Difference Time-Domain (FDTD) method, which struggles with staircasing errors on curved boundaries, or the Boundary Element Method (BEM), which is often limited to piecewise homogeneous media, FEM offers superior flexibility in handling complex, multi-material geometries and anisotropic media through unstructured tetrahedral meshes. This geometric fidelity and availability of automated libraries make FEM the preferred choice for advanced engineering applications, such as stealth technology and photonic crystal design. 

The FEniCS Project is one of the libraries frequently used for high-performance implementation of FEM. Designed as an automated programming environment, it utilizes symbolic domain-specific languages to streamline the generation of efficient $H(\mathrm{curl})$-conforming finite element kernels~\cite{logg2012fenics}. In typical industrial high-frequency applications, mesh sizes can reach up to a billion unknowns, necessitating the use of massively parallel solvers and large computational time on high-performance computing clusters. Unlike traditional FEMs running on HPC, \methodname{} runs on a single L40S GPU in seconds. FEMs at best approach zero PDE residuals asymptotically, whereas \methodname{} has null PDE residuals intrinsically. To separate the architectural speedup from GPU acceleration, Appendix~\ref{app:CPU} reports CPU timings for \methodname{} under the same sparse-reconstruction setup.

\textbf{Neural Operators}
learn mappings between function spaces rather than finite-dimensional parameter vectors and have become a standard paradigm for amortized PDE surrogate modeling. Canonical examples include DeepONet~\cite{lu2021deeponet}, FNO~\cite{li2021fno}, and physics-informed variants such as PINO~\cite{li2024pino}; recent reviews emphasize their ability to generalize across discretizations, but also their dependence on the geometry, representation, and physics of the target problem~\cite{azizzadenesheli2024neuraloperators}.
For electromagnetics, operator-learning approaches exist for certain regimes: frequency-domain photonic-device simulation~\cite{gu2022neurolight,zhu2024pace}, free-form 3D electromagnetic scattering and inverse design~\cite{augenstein2023surrogate}, 2D transient wave propagation~\cite{oh2026pideeponet}, time-domain photonic rollouts~\cite{wu2026dafno}, and Fourier-space hard-constrained electrodynamics tailored to beam-driven settings~\cite{leon2024fhm}. These works show that neural operators are promising for electromagnetism problems, but they do not define a directly commensurate state-of-the-art for our combination of spacetime formulation, field generality, sparse observations, and exact constraints.

We are not aware of an off-the-shelf neural-operator baseline that is directly commensurate with our per-instance sparse-reconstruction setting. A from-scratch neural-operator comparison would require substantial upfront generation of accurate Maxwell data, followed by offline training before fast inference becomes possible~\cite{azizzadenesheli2024neuraloperators,iyer2026photonicdl}. Such a comparison would therefore probe a different computational regime---amortized surrogate learning over many problem instances---rather than the sparse-data, per-instance regime considered here.

\textbf{Gaussian Processes} (GPs) provide a natural Bayesian framework for linear PDEs, as linear operators preserve Gaussian structure. Early work incorporated exact linear constraints through latent parameterizations and specialized kernels, including latent force models, divergence-/curl-free constructions, LODE-GPs, and Maxwell-adjacent magnetic-field models \cite{alvarez2009latent,macedo2010learning,solin2018magnetic,jidling2017linearly,langehegermann2018algorithmic,besginow2022lode}. More recent work extended exact-prior ideas to constant-coefficient PDE systems and boundary conditions, including EPGP and boundary-constrained constructions \cite{harkonen2023gaussian,langehegermann2021boundary,langehegermann2022boundary,dalton2024boundary,huang2024bepgp,li2025gaussian}. As is the case for \methodname{}, these models embed the physical law  in the admissible function class rather than only inform the penalty terms. 
In contrast to these GPs, we realize exact Maxwell structure as an architectural inductive bias in a trainable shallow network, enabling direct gradient-based fitting from sparse data rather than expensive posterior inference over a GP prior.

\textbf{Hard-constrained Scientific ML and Exact-solution Spaces.}
Beyond residual-based training, several lines of work restrict approximation directly to physically admissible function classes \cite{lagaris1998annpde,schiassi2021xtfc,sukumar2022exactbc}. In wave computation, Trefftz and ultra-weak variational methods use local trial functions that satisfy the governing equations by construction \cite{trefftz1926gegenstueck,huttunen2007uwvf,hiptmair2016trefftzsurvey}. In modern ML, hard-constrained Maxwell architectures already exist, for example through potential-based or Fourier-space parameterizations \cite{scheinker2023pcnn,leon2024fhm}. Our method differs from both lines of work in that it is a shallow trainable architecture for per-instance sparse-data fitting, where each neuron is designed as a homogeneous Maxwell solution.

Related ideas therefore appear across residual-based scientific ML, hard-constrained Maxwell architectures, exact-prior GP models, and exact wave-based trial spaces; our contribution sits at their intersection as a rapidly trainable exact model class for sparse-data fitting.

\section{Experiments}
\label{experiments}

We present experiments designed to demonstrate the efficiency and accuracy of \methodname{} as a solver of Initial Value Problems  for \eqref{eq:maxwell} in comparison to prior art: PINNs, FEMs, and S-EPGP\footnote{Code for \methodname{} and all baselines discussed in this paper are available in the supplementary material.}. We perform this comparison in two settings: with and without boundary conditions (BC and IC, respectively), whenever possible. Note that FEMs can only operate in the BC setup. We then report and discuss the experimental results for four ground truth solutions to Maxwell's equations, which we term \textit{Plane Waves}, \textit{Radial Waves}, \textit{Hopf Fibration}, and \textit{Random Solution}. These solutions are chosen to be as different from our \eqref{eq:model} basis and from each other as possible, demonstrating the generalization capabilities of our network  (see Appendix~\ref{app:ground_truths} for details). 
We first describe our experimental setup and evaluation criteria and then our three experiments:
1) \textit{Which method can reach relative $L^2$-accuracy of 5\% fastest?}  2) \textit{What is the  accuracy each method achieves after 70s of training time?} 
3) \textit{What is the best accuracy \methodname{} achieves with only 100 data samples?}

We first pick a solution $\E_\gt,\B_\gt\colon[0,\infty)\times(0,1)^3\to\R^3$ from the list of four in Appendix~\ref{app:ground_truths}.

In the \textbf{initial value problems (IC) setup} we seek to reconstruct $\E,\B\colon [0,0.1]\times [0.2,0.8]^3\to\R^3$, solving \eqref{eq:maxwell} using 1,000 data points $(x,y,z)\in[0,1]^3$ at $t=0$:
\begin{align}\tag{IC}\label{eq:initial_conditions}
    \E(0,x,y,z)=\E_\gt(0,x,y,z)\quad\text{and}\quad \B(0,x,y,z)=\B_\gt(0,x,y,z).
\end{align}
The evaluation domain $[0,0.1]\times [0.2,0.8]^3$ is chosen to lie within the domain of dependence of the initial-data region $\{0\}\times [0,1]^3$, as prescribed by the finite speed of propagation of Maxwell's equations \cite[Theorem 3.12]{Lax2006}.
This local spacetime reconstruction setting is different from the standard fully specified forward problems for which FEMs are designed.

In the \textbf{initial-boundary value problems (BC) setup} we consider tangential boundary conditions
\begin{align}\tag{BC}\label{eq:boundary_condition}
    \E(t,x,y,z)\times \n = \E_\gt(t,x,y,z)\times \n\quad\text{for }t\in[0,1]\text{ and }(x,y,z)\in\partial[0,1]^3,
\end{align}
 where $\n$ is the outward normal to the boundary of the cube $\partial [0,1]^3$ (for instance, on the top of the cube, $\n = (0,0,1)$). We predict over a validation set sampled from $(t,x,y,z)\in[0,1]\times[0,1]^3$, which is consistent with the fact that the system \eqref{eq:maxwell} with initial conditions \eqref{eq:initial_conditions} and boundary conditions \eqref{eq:boundary_condition} is well-posed in $L^2((0,1)^4)$. In the BC setup we sample 1,000/7 points $(t,x,y,z)$ on each of the 7 faces of the cube $(0,1)^4$ which are considered in \eqref{eq:initial_conditions} and \eqref{eq:boundary_condition}.

\begin{table}[t!]
  \caption{\textbf{Experiment 1: Race to 5\%.} Validation Error, Residual Error, and Compute training time to reach $5\%$ relative validation error (if possible) for the ground truths: \textit{Plane Waves}, \textit{Radial Waves}, \textit{Hopf Fibration}, \textit{Random Solution} with/without boundary conditions (BCs). Animations to complement these results visually
  are available in the supplementary material at \texttt{flash\_max/flash\_max\_gifs}.
 }
  \label{main-results}
  \centering
  \begin{tabular}{crcrrr}
    \toprule
    & \textbf{Method}                      & \textbf{BC} & \textbf{Val Error (\%)} & \textbf{Residual Error} & \textbf{Compute (s)} \\
    \midrule
    \multirow{8}{*}{\rotatebox[origin=c]{90}{\textit{Plane Waves}}}
    & \multirow{2}{*}{FEM}                 & \checkmark  & $8.22$                  & $0.052$                 & $3{,}000$ \\
    &                                      & \xmark      & \texttt{n/a}            & \texttt{n/a}            & \texttt{n/a}  \\
    \cdashline{2-6}
    & \multirow{2}{*}{PINN}                & \checkmark  & $\bfnum{<5.00}$         & $0.007$                 & $304.2 \pm \phantom{0}73.2$ \\
    &                                      & \xmark      & $\bfnum{<5.00}$         & $0.013$                 & $119.3 \pm \phantom{0}28.5$ \\
    \cdashline{2-6}
    & \multirow{2}{*}{S-EPGP}              & \checkmark & $26.65$                  & $\bfnum{0}$             & $345.7 \pm \phantom{0}69.7$  \\
    &                                      & \xmark     & $25.04$                  & $\bfnum{0}$             & $61.4 \pm \phantom{0}25.5$ \\
    \cdashline{2-6}
    & \multirow{2}{*}{\methodname\ (ours)} & \checkmark & $\bfnum{<5.00}$          & $\bfnum{0}$             & $\bfnum{8.1} \pm \phantom{00}\bfnum{0.4}$ \\
    &                                      & \xmark     & $\bfnum{<5.00}$          & $\bfnum{0}$             & $\bfnum{6.8} \pm \phantom{00}\bfnum{0.3}$ \\
    \midrule
    \multirow{8}{*}{\rotatebox[origin=c]{90}{\textit{Radial Waves}}}
    & \multirow{2}{*}{FEM}                 & \checkmark & 65.47     & 2.002                      &  2.9 \\
    &                                      & \xmark     & \texttt{n/a}             & \texttt{n/a}            & \texttt{n/a} \\
    \cdashline{2-6}
    & \multirow{2}{*}{PINN}                & \checkmark & $93.9$                   & $0.063$                 &  $47.4 \pm \phantom{0}75.0$ \\
    &                                      & \xmark     & $86.7$          & $0.072$                 & $56.4 \pm \phantom{0}34.2$ \\
    \cdashline{2-6}
    & \multirow{2}{*}{S-EPGP}              & \checkmark & did not converge     & --                      &  -- \\
    &                                      & \xmark     & $5.85$                   & $\bfnum{0}$             &  $89.1 \pm \phantom{0}15.5$ \\
    \cdashline{2-6}
    & \multirow{2}{*}{\methodname\ (ours)} & \checkmark & $\bfnum{<5.00}$          & $\bfnum{0}$             & $\bfnum{58.9} \pm \phantom{00}\bfnum{2.0}$ \\
    &                                      & \xmark     &  $\bfnum{<5.00}$         & $\bfnum{0}$             & $\bfnum{11.9} \pm \phantom{00}\bfnum{0.6}$ \\
    \midrule
    \multirow{8}{*}{\rotatebox[origin=c]{90}{\textit{Hopf Fibration}}}
    
    & \multirow{2}{*}{FEM}                 & \checkmark & $\bfnum{<5.00}$          & $0.009$                 & $157$\\
    &                                      & \xmark     & \texttt{n/a}             & \texttt{n/a}            & \texttt{n/a} \\
    \cdashline{2-6}
    & \multirow{2}{*}{PINN}                & \checkmark &   $\bfnum{<5.00}$        & $0.010$                 & $235.3 \pm \phantom{0}80.7$ \\
    &                                      & \xmark     &  $\bfnum{<5.00}$         & $0.008$                 & $89.8 \pm \phantom{0}25.9$ \\
    \cdashline{2-6}
    & \multirow{2}{*}{S-EPGP}              & \checkmark & $37.16$                  & $\bfnum{0}$             & $410.4 \pm \phantom{0}78.9$ \\
    &                                      & \xmark     & $\bfnum{<5.00}$          & $\bfnum{0}$             & $153.3 \pm \phantom{0}44.8$ \\
    \cdashline{2-6}
    & \multirow{2}{*}{\methodname\ (ours)} & \checkmark &  $\bfnum{<5.00}$         & $\bfnum{0}$             & $\bfnum{7.4} \pm \phantom{00}\bfnum{0.3}$ \\
    &                                      & \xmark     &  $\bfnum{<5.00}$         & $\bfnum{0}$             & $\bfnum{3.2} \pm \phantom{00}\bfnum{0.0}$ \\
    \midrule
    \multirow{8}{*}{\rotatebox[origin=c]{90}{\textit{Random Soln.}}}
    
    & \multirow{2}{*}{FEM}                 & \checkmark & $\bfnum{<5.00}$                   & $0.022$                 & $13{,}701$ \\
    &                                      & \xmark     & \texttt{n/a}             & \texttt{n/a}            & \texttt{n/a} \\
    \cdashline{2-6}
    & \multirow{2}{*}{PINN}                & \checkmark &   $5.65$                 & $0.017$                 & $355.1\pm \phantom{0}31.1$\\
    &                                      & \xmark     &   $\bfnum{<5.00}$        & $0.011$                 & $131.9 \pm \phantom{0}19.4$ \\
    \cdashline{2-6}
    & \multirow{2}{*}{S-EPGP}              & \checkmark & $75.0$                   & $\bfnum{0}$             & $516.7\pm \phantom{00}7.7$ \\
    &                                      & \xmark     & $\bfnum{<5.00}$          & $\bfnum{0}$             & $115.9 \pm \phantom{00}2.1$ \\
    \cdashline{2-6}
    & \multirow{2}{*}{\methodname\ (ours)} & \checkmark &  $\bfnum{<5.00}$         & $\bfnum{0}$             & $\bfnum{58.3} \pm \phantom{00}\bfnum{3.2}$ \\
    &                                      & \xmark     &  $\bfnum{<5.00}$         & $\bfnum{0}$             & $\bfnum{6.1} \pm \phantom{00}\bfnum{0.2}$ \\
    \bottomrule
  \end{tabular}\label{tab:race_to_5percent}
\end{table}


\textbf{The evaluation metric} we consider is  relative $L^2$-error to the ground truth, defined as
\begin{align*}
   \texttt{error}_\gt= \text{RMSE}((\E,\B),(\E_\gt,\B_\gt))/\text{RMSE}((\E_\gt,\B_\gt),\mathbf{0}).
\end{align*}
We validate over sets sampled from $[0,0.1]\times[0.2,0.8]^3$ in the IC setup and from $[0,1]^4$ in the BC setup.
For the PDE residual (PINNs and FEMs), we also report the $L^2$-error, namely
\begin{align*}
    \texttt{error}_{\mathrm{PDE}}=\text{RMSE}((\partial_t \E-\nabla \times \B,
    \partial_t\B + \nabla\times \E,
    \nabla\cdot \B,
    \nabla \cdot \E),\mathbf{0}).
\end{align*}
In all experiments, we optimize each method's hyperparameters (including the number of training and/or collocation points) individually and report the best-performing setups. For example, while \methodname{} performed  well  with 1K data points, PINNs needed 10K, and S-EPGP needed 12K. Implementation details for all methods can be found in Appendices~\ref{app:detailed_flashmax_description},~\ref{app:implementation_details}. 



\begin{figure}[th!]
  \centering
  \includegraphics[width=\textwidth]{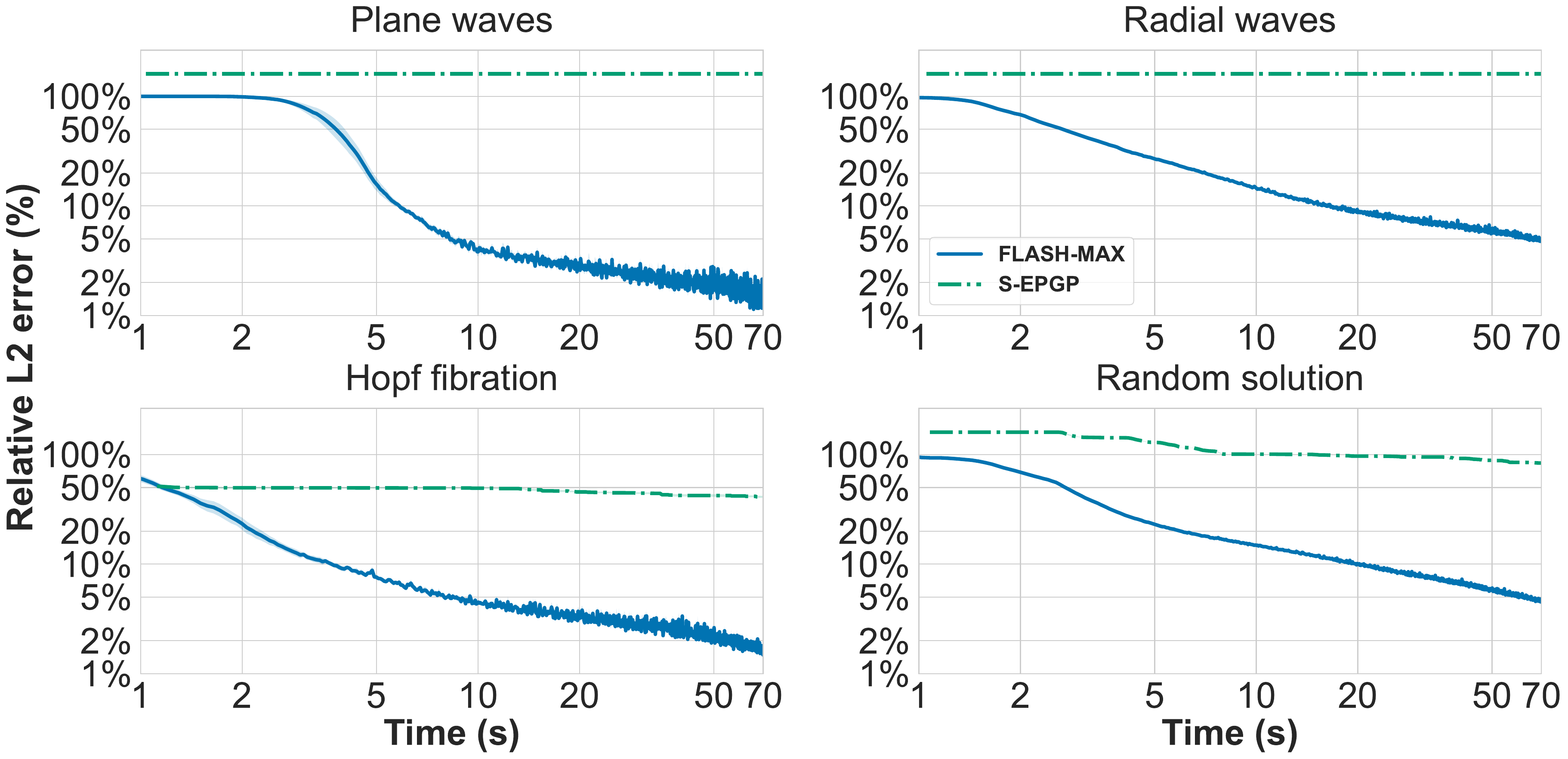}
  \caption{\textbf{Experiment 2: Performance on a Time Budget (BC setup).} The curves depict the achievable error rates ($y$-axis) given a specific time budget ($x$-axis) for the setting with boundary conditions. Uncertainties are shown (standard error of the mean over 5 random seeds). \methodname{} (ours, blue solid line) clearly converges significantly faster than S-EPGP  across all four problems.  FEM and PINN are excluded since they do not reach the PDE residual error threshold of $0.01$ during the time budget (cf. Table~\ref{tab:race_to_5percent}), which is a bare minimum to claim fidelity to the physics. 
  Plots for the IC setup can be found in the Appendix~\ref{app:additional_experiments}, Figure~\ref{fig:timing_ic} and follow a similar pattern. 
  }
  \label{fig:timing_bc}
\end{figure}


The \textbf{first experiment is a race to 5\%}. Given the target validation  $\texttt{error}_\gt<5\%$, we optimize each method, stopping as soon as the target validation error is reached (if it is reached at all). We further impose the normalized residual threshold \(\texttt{error}_{\mathrm{PDE}}<10^{-2}\), a generous threshold for
normalized Maxwell equations, reflecting the standard role of residuals as  indicators of physical fidelity in FEM analysis \cite{verfurth2013posteriori,Schoeberl2008} and neural operators \cite{mao2026accurate}. In contrast, \methodname{} and S-EPGP have residual error equal to 0 by construction. For each method, we report the validation error, residual error, and compute time   in both the BC and IC setups as applicable. 

Table~\ref{tab:race_to_5percent} presents our \emph{Race to 5\% results}, reporting validation error, residual error, and compute time (in seconds) across all methods, ground truths, and boundary condition setups. Experiments utilized different hardware: NVIDIA L40S GPU (\methodname{}, PINN), NVIDIA A100 GPU (S-EPGP), and M2 CPU (FEM), yet the comparison remains fair and underscores the algorithmic efficiency of \methodname{}. First, comparing \methodname{} and the in-house PINN on the exact same L40S GPU provides an apples-to-apples comparison against a primary machine learning baseline where our method achieves  orders-of-magnitude speedup. 
Second, S-EPGP remains significantly slower despite utilizing a more powerful NVIDIA A100 GPU to accommodate the intensive computational requirements of Gaussian process regression. 
Finally, FEM ran on a CPU, which is standard for FEniCS. To eliminate GPU acceleration as a confounder, we replicated \methodname{} on the same CPU hardware (Appendix~\ref{app:CPU}). \methodname{} still reached target accuracy substantially faster than FEM, proving that the speedup is architectural rather than hardware-driven.
In the sparse-data fast-training regime discussed here, \methodname{}  outperforms all competitor approaches \emph{by one to three orders of magnitude}. Notably, \methodname{} uses identical hyperparameter settings across all ground truths and both the IC and BC setups, achieving robust performance without the need to re-tune  hyperparameters on an instance-by-instance basis. PINNs and FEMs, unlike \methodname{}, do not yield identically zero residual error. PINNs are highly unstable during training, and although they sometimes converge to the 5\% validation error threshold, they frequently fail to do so, even with extensive hyperparameter tuning. S-EPGPs do not reach the 5\% validation error threshold in many of the analyzed setups. FEMs, which are not applicable to the IC setup, frequently require several orders of magnitude more training time to reach the 5\% validation error threshold, and sometimes do not converge to the threshold at all. 

The \textbf{second experiment is performance on a time budget} of 70s.
Figure~\ref{fig:timing_bc} depicts the validation error $\texttt{error}_\gt$ over time in a log-log scale. See also Figure~\ref{fig:timing_ic} in the appendix for more comparisons.
We see that our approach \methodname{} reduces the error faster and continues to consistently reduce the error during training time.
FEMs and PINNs do not reach the PDE residual error threshold of 0.01 and S-EPGP shows very slow convergence, as is to be expected for a Gaussian process method.

The \textbf{third experiment is performance on a data budget}.
We analyze the data requirements of \methodname{} in Table~\ref{tab:n_points_sensitivity}. We find that \methodname{} performs well even with very sparse observations, often reaching relative $L^2$-error below 1\% with just a few hundred training pairs. In all experiments shown in Table~\ref{tab:n_points_sensitivity}, we use the IC setup shown in Table~\ref{tab:race_to_5percent} (rows with \xmark), varying only the number of data points used in training and fixing the random seed to 42 across runs. We find  consistent, strong performance across all four ground truth solutions, with as few as 100 to 500 training observations.
%
We refer to Appendix~\ref{app:additional_experiments} for additional experiments, ablations, and sensitivity analyses.

\begin{table}[b]
    \centering
    \small
    \setlength{\tabcolsep}{3.5pt}
    \caption{\textbf{Experiment 3: Performance on a Data Budget}. Sensitivity of \methodname{} to the number of training points (\#pts) in the IC (initial conditions) setup.
    All runs use random seed 42. We write PW = \textit{Plane Waves}, RW = \textit{Radial Waves}, HF = \textit{Hopf Fibration},
    RS = \textit{Random Solution}.}
    \label{tab:n_points_sensitivity}
    \begin{tabular}{@{}rrrrr@{\qquad}rrrrr@{}}
    \toprule
    \multicolumn{5}{c}{\textbf{Min. RL2E (\%)}} &
    \multicolumn{5}{c}{\textbf{Time to min. RL2E (s)}} \\
    \cmidrule(lr){1-5}
    \cmidrule(lr){6-10}
    \textbf{\# pts.} & \textbf{PW} & \textbf{RW} & \textbf{HF} & \textbf{RS}
    & \textbf{\# pts.} & \textbf{PW} & \textbf{RW} & \textbf{HF} & \textbf{RS} \\
    \midrule
     100 & 2.6    & 7.4    & $<1.0$ & 2.0
         & 100 & 520.4 & 138.0 & 33.2 & 137.0 \\
     200 & $<1.0$ & 2.9    & 1.1    & $<1.0$
         & 200 & 61.4  & 144.6 & 14.2 & 67.7 \\
     500 & $<1.0$ & $<1.0$ & $<1.0$ & $<1.0$
         & 500 & 19.2  & 164.7 & 11.5 & 50.2 \\
     1,000 & $<1.0$ & $<1.0$ & $<1.0$ & $<1.0$
         & 1,000 & 39.6  & 105.8 & 12.5 & 53.3 \\
     2,000 & $<1.0$ & $<1.0$ & $<1.0$ & $<1.0$
         & 2,000 & 35.0  & 86.6  & 19.0 & 57.2 \\
     5,000 & $<1.0$ & $<1.0$ & 1.5    & $<1.0$
         & 5,000 & 33.2  & 94.5  & 11.3 & 70.5 \\
    10,000 & $<1.0$ & $<1.0$ & $<1.0$ & $<1.0$
         & 10,000 & 40.4 & 97.3  & 21.9 & 73.1 \\
    12,000 & 1.3    & $<1.0$ & $<1.0$ & $<1.0$
         & 12,000 & 37.7 & 109.6 & 27.4 & 71.3 \\
    \bottomrule
    \end{tabular}
\end{table}

\paragraph{Discussion.} The experiments identify several advantages of \methodname{} within the sparse-reconstruction setting. First, \methodname{} is one to three orders of magnitude faster than prior art on almost every method-problem experiment (an end-to-end neural-operator baseline including solver-generated training data and hyperparameter tuning could require weeks to months of computation, so we did not include it as a primary comparator). Second, \methodname{} has a solution guarantee (zero residual error), like S-EPGP. PINNs and FEMs on the other hand do not have solution guarantees. Third, \methodname{} only requires sparse data (hundreds to a thousand data points vs.\ more than 10 thousand for other methods). Fourth, \methodname{} is able to obtain consistent performance reaching the validation error threshold without specific hyperparameter tuning for each problem separately. The same hyperparameters generalize to all four problems (Appendix ~\ref{app:implementation_details}). 

Although our in-house PINN provides the strongest performance among the PINN-style baselines considered in this work, its behavior remains highly problem-dependent and can become unstable on more challenging solution families. In particular, for the \textit{Radial Waves} solution, the model exhibits very poor accuracy, with a relative error of \(93.9\%\) for the case with boundary conditions, together with a relatively large residual error of \(0.063\). Since well-trained runs in our setting typically attain residual errors around \(0.01\), this result indicates a clear optimization failure rather than merely a mismatch between residual minimization and solution accuracy. We also observe substantial variability in computational cost, e.g., \(47.4 \pm 75.0\), which further reflects instability across runs.

The S-EPGP baseline also shows clear limitations, especially in the boundary-constrained setting. This behavior suggests that the finite-feature S-EPGP approximation is not expressive enough to simultaneously satisfy the initial and boundary constraints over the full space-time domain. In practice, the augmented training system appears to introduce strong competition between these two sources of supervision, and the resulting approximation quality deteriorates markedly across all four solutions. In addition, in the case without boundary conditions, since performance degrades substantially when extrapolating to longer time horizons, evaluation for this baseline is restricted to the interior region \([0,0.01]\times[0.2,0.8]^3\). We note that this is a shorter temporal interval than that used for \methodname{}. These results indicate that, although S-EPGP provides a useful statistical baseline, its applicability to the boundary-constrained 4D Maxwell setting considered here is limited.

The FEM baseline exhibits a different limitation from the learning-based baselines. As a structure-preserving discretization, its behavior depends directly on mesh resolution, polynomial degree, and solution regularity. In the degree-2 setup reported in Table~\ref{tab:race_to_5percent}, FEM reaches the validation target for the \textit{Hopf Fibration} and \textit{Random Solution} cases, while \textit{Plane Waves} remain above the $5\%$ target. The \textit{Radial Waves} case is the most challenging FEM result, since further mesh refinement increased the validation error, consistent with a large near-origin feature of the spherical-pulse solution on $[0,1]^3$. 

\textbf{Limitations.}
Our work focuses on the homogeneous, source-free Maxwell system with constant material parameters. The exactness guarantee of \methodname{} relies on this structure: incorporating currents, charges, conductivity, heterogeneous/dispersive media, and interface conditions would require additional exact parametrizations or constraint mechanisms. Moreover, while \methodname{}'s predictions satisfy the governing PDEs by construction, the initial and boundary observations are  imposed through data fitting rather than symbolic enforcement. Thus, the accuracy of the recovered field also depends on  data samples,  finite network size, and  nonconvex optimization strategy.

\section{Conclusions}


In this work, we introduced \methodname, an exact-by-construction architecture for reconstructing homogeneous electromagnetic fields from sparse observations. By embedding Maxwell’s equations directly into the model class \eqref{eq:model}, all predictions satisfy the governing physics identically, enabling fast training and accurate reconstruction from limited data. Empirically, the method achieves sub-1\% error in seconds while guaranteeing a zero PDE residual, substantially improving optimization speed over residual-based approaches such as PINNs, see Tables~\ref{tab:race_to_5percent},~\ref{tab:n_points_sensitivity}. 

Theoretically, we showed in Theorem~\ref{thm:spacetimeUAT} that this exact solution class remains expressive via a  universality result in arbitrary domains, demonstrating that strict physical admissibility need not limit approximation power. Overall, these results indicate that moving governing structure from the loss into the hypothesis class can improve the trade-off between physical consistency and computational efficiency, suggesting a broader design principle for scientific machine learning.





\newpage
{
    \small
    \bibliographystyle{unsrtnat} 
    \bibliography{main}
}

\newpage
\appendix
\section{Additional experiments}\label{app:additional_experiments}
\subsection{Sensitivity Analysis and Ablations}\label{app:ablations}

We evaluate \methodname{} with different network widths and activation functions in Table~\ref{tab:width}.
The results are consistently good.

\begin{table}[h]
    \centering
    \caption{\methodname{}'s sensitivity to network width (Width) and activation function (Activation). In all these runs, we use the \textit{Random Solution} and a seed of 42. All other hyperparameters are fixed as in Table~\ref{tab:race_to_5percent}. Executed on a single NVIDIA L40S.}
    \begin{tabular}{rrrrrrr}
    \toprule
        \textbf{Width}  & \textbf{Activation} & \textbf{Time to <5\% RL2E.} & \textbf{Min. RL2E.} & \textbf{Time to min. RL2E.} \\
         \textbf{(thousands)} &  &  \textbf{(s)} &  \textbf{(\%)} & \textbf{(s)} \\
    \midrule
        0.1               &       tanh &                488.2 &           4.9 &                488.2 \\
        0.2               &       tanh &                297.0 &           3.2 &                501.2 \\
        0.5               &       tanh &                202.8 &           3.1 &                506.4 \\
        1.0               &       tanh &                126.3 &           2.6 &                495.1 \\
        2.0               &       tanh &                 98.8 &           2.1 &                475.2 \\
        5.0               &       tanh &                 78.9 &           2.0 &                492.2 \\
        10.0              &       tanh &                 53.9 &           1.6 &                499.3 \\
        12.0              &       tanh &                 50.4 &           1.6 &                474.7 \\
        10.0              &       relu &                326.1 &           3.9 &                503.4 \\
        10.0              &       silu &                118.2 &           2.2 &                496.7 \\
        10.0              &     cosine &                 13.5 &           1.1 &                227.9 \\
        10.0              &       gelu &                 69.5 &           2.2 &                538.1 \\
        10.0              &    sigmoid &                320.9 &           3.0 &                598.6 \\
    \bottomrule
    \end{tabular}
    \label{tab:width}
\end{table}

\subsection{Runtimes on CPU}\label{app:CPU}
We also note that \methodname{} is rather efficient even in the absence of a GPU. Table~\ref{tab:cpu} shows timing results for exactly the same setups as shown in Table~\ref{tab:race_to_5percent}, except that all computation is handled on a CPU. We note that errors well below the 5\% threshold are easily achieved in under 10 minutes of CPU training time in most cases, and in all other cases, the threshold is not far afield after that amount of training time. 
\begin{table}[h]
    \centering
    \caption{The same \methodname{} experiments as were carried out in Table~\ref{tab:race_to_5percent}, but executed on a CPU, an INTEL(R) XEON(R) GOLD 6542Y. We allowed at most 10 minutes of training (600 seconds) to each experiment.}
    \begin{tabular}{rrrr}
    \toprule
        \textbf{Solution} & \textbf{BC} & \textbf{Minimum Val Error (\%)} & \textbf{Compute (s)} \\
    \midrule
        \textit{Plane Waves}       & \checkmark  & $1.1 \pm 0.066$      & $551.2 \pm 14.3$ \\
        \textit{Plane Waves}       & \xmark      & $1.2 \pm 0.070$      & $477.1 \pm 46.8$ \\
        \textit{Radial Waves }     & \checkmark  & $5.4 \pm 0.081$      & $596.2 \pm 1.40$ \\
        \textit{Radial Waves}      & \xmark      & $1.9 \pm 0.069$      & $577.7 \pm 10.2$ \\
        \textit{Hopf Fibration}    & \checkmark  & $1.6 \pm 0.061$      & $582.8 \pm 10.3$ \\
        \textit{Hopf Fibration}    & \xmark      & $1.1 \pm 0.030$      & $165.5 \pm 7.90$ \\
        \textit{Random Solution}   & \checkmark  & $5.7 \pm 0.015$      & $596.4 \pm 2.20$ \\
        \textit{Random Solution}   & \xmark      & $1.2 \pm 0.044$      & $446.6 \pm 78.4$ \\
    \bottomrule
    \end{tabular}
    \label{tab:cpu}
\end{table}

\subsection{{Experiment 2}: Performance on a Time Budget (IC setup)}

The main paper includes performance over time for BC in Figure~\ref{fig:timing_bc}.
We also include the same results for the experiments using only initial conditions (IC) here in Figure~\ref{fig:timing_ic}.

\begin{figure}[th!]
  \centering
  \includegraphics[width=\textwidth]{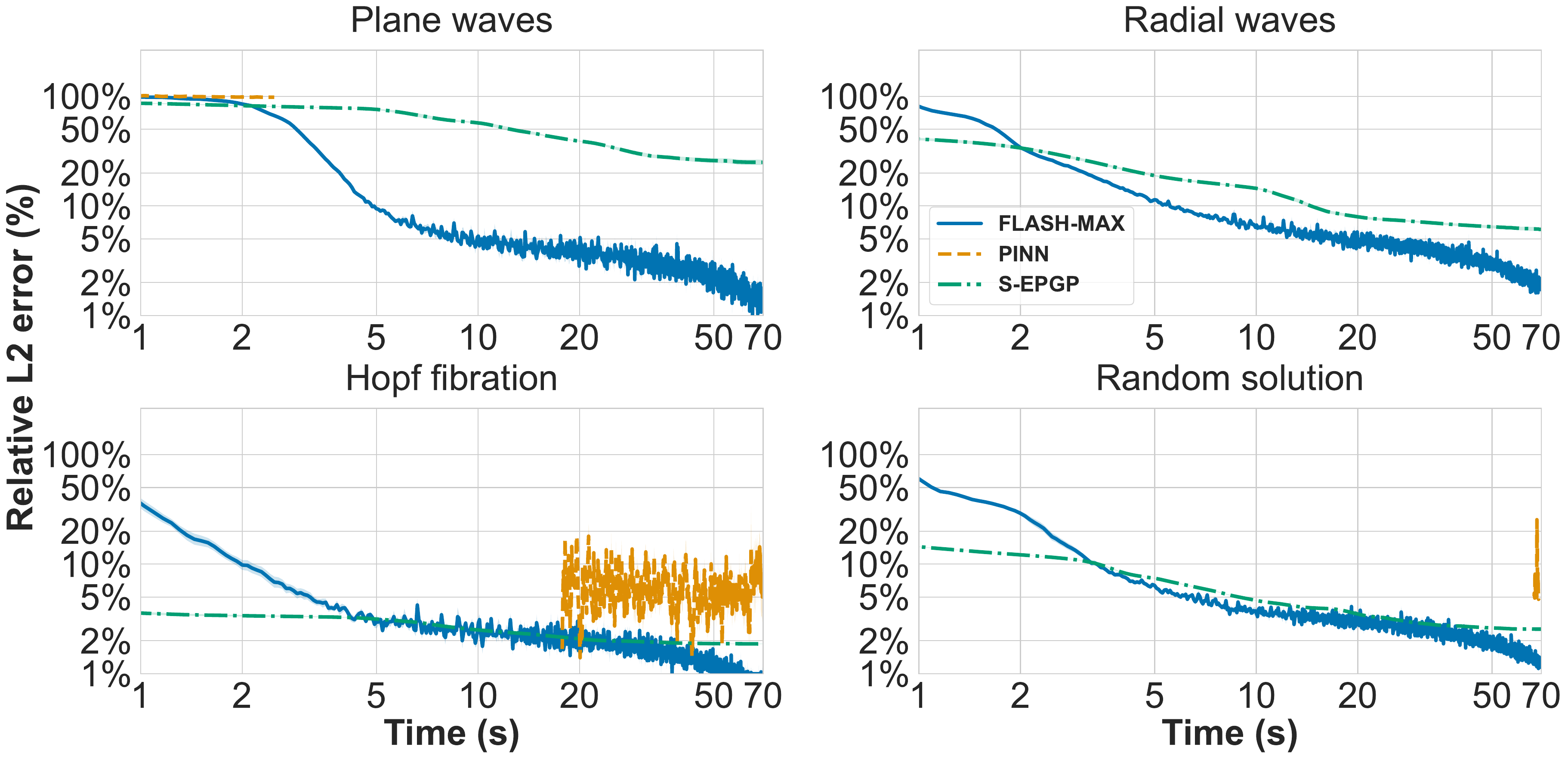}
  \caption{\textbf{Experiment 2: Performance on a Time Budget (IC setup).} The curves depict the achievable error rates ($y$-axis) given a specific time budget ($x$-axis) for the setting with initial conditions. Uncertainties are shown (standard error of the mean over 5 random seeds). \methodname{} (blue solid line) converges more reliably and usually significantly faster than existing methods across all four problems: \textit{Plane waves}, \textit{Radial waves}, \textit{Hopf fibration}, and \textit{Random solution}. Plots are analogous to those for the BC setup in Figure~\ref{fig:timing_bc}. While S-EPGP seems to perform comparably here, we note that it required an NVIDIA A100 to achieve this performance, while \methodname{} is achieving this performance on an NVIDIA L40S.}
  \label{fig:timing_ic}
\end{figure}

\section{Implementation Details}
\subsection{Detailed \methodname{} Description}\label{app:detailed_flashmax_description}

We also express \methodname{} in matrix notation: Let $\Z_i\in\R^{2W\times4}$ for $i\in\{1,2\}$ with entries $\Z_{ik\ell}$  having  $\ell$th column denoted by $\Z_{i\bullet\ell}\in\R^{2W}$ for $\ell\in\{0,1,2,3\}$ and vectors $\boldsymbol{b}_i,\w_i\in\R^{2W}$ with entries $\boldsymbol{b}_{ik}$ resp.\ $\w_{ik}$. 
For notational simplicity, we compress the notation in \eqref{eq:model} by absorbing the $j$ index in \eqref{eq:char_var} directly in the definition of our special frequencies $$\Z_{ik0}=\pm\sqrt{\Z_{ik1}^2+\Z_{ik2}^2+\Z_{ik3}^2}$$
with half the signs being $+$ resp. $-$. This simplification requires us to double the width parameter $W$ from \eqref{eq:model} (instead of working with indices $(j,k)$ with $j=\pm1$, $k=1,\ldots W$, we now only work with indices $k=1,\ldots 2W$).
Furthermore, we express~\eqref{eq:noetherian_multipliers} using the $2W\times 6$ matrices
\begin{align*}
    \p_1\left(\Z_1\right) = \begin{bmatrix}
        -\Z_{1\bullet1} \odot \Z_{1\bullet3} \\
        -\Z_{1\bullet2} \odot \Z_{1\bullet3} \\
        \Z_{1\bullet0} \odot \Z_{1\bullet0} - \Z_{1\bullet3} \odot \Z_{1\bullet3} \\
        -\Z_{1\bullet0} \odot \Z_{1\bullet2} \\
        \Z_{1\bullet0} \odot \Z_{1\bullet1} \\
        \mathbf{0} \\
    \end{bmatrix},\,
    \p_2\left(\Z_2\right) = \begin{bmatrix}
        \Z_{2\bullet1} \odot \Z_{2\bullet2} \\
        -\Z_{2\bullet0} \odot \Z_{2\bullet0} + \Z_{2\bullet2} \odot \Z_{2\bullet2} \\
        \Z_{2\bullet2} \odot \Z_{2\bullet3} \\
        -\Z_{2\bullet0} \odot \Z_{2\bullet3} \\
        \mathbf{0} \\
        \Z_{2\bullet0} \odot \Z_{2\bullet1} \\
    \end{bmatrix} 
\end{align*}
and entry-wise application of the activation $\sigma$ we obtain~\eqref{eq:model} at $\x\in\R^4$
\begin{align}\label{eq:forward_pass}
    \begin{pmatrix}\E_\theta(\x)\\\B_\theta(\x)\end{pmatrix}=\w_1^{\top}\left(\sigma(\x^{\top}\Z_1+\boldsymbol{b}_1)\odot \boldsymbol{p}_1(\Z_1)\right)
    +
    \w_2^{\top}\left(\sigma(\x^{\top}\Z_2+\boldsymbol{b}_2)\odot \boldsymbol{p}_2(\Z_2)\right)\in\R^6,
\end{align}
where $\theta=\{\boldsymbol{Z}_{ik1},\boldsymbol{Z}_{ik2},\boldsymbol{Z}_{ik3},\boldsymbol{b}_{ik},\boldsymbol{w}_{ik}\colon i=1,2,\,k=1,\ldots 2W\}$ are ordinary trainable weights.

The forward pass of \methodname{} described above is also represented in Figure~\ref{fig:forward_pass}.

We view~\eqref{eq:model} and its implementation as a shallow neural network with two bells and whistles attached: the weight sharing in the $\z_{ik0}$ being determined by other entries and the multiplication by the $\boldsymbol{p}(\z_i)$.
These two bells and whistles ensure the properties regarding Maxwell's equations~\eqref{eq:maxwell}
Most importantly, the model in~\eqref{eq:model} only yields solutions of~\eqref{eq:maxwell}, which is proved by direct calculation in Lemma~\ref{lem:solution}).
Moreover, \methodname{} has very few hyperparameters, as we do not have additional layers, nor a need for collocation points to enforce the equations.

\begin{figure}
\centering
\begin{tikzpicture}[font=\small]

\node[io] (input) at (0,-1.4) {$\x=(t,x,y,z)\in\R^4$};
\coordinate (split) at (0,-2.3);

\node[pbox] (pol1) at (-3.35,-3.5) {$\boldsymbol{w}_{1k}\sigma(\x^{\top}\z_{1k}+\boldsymbol b_{1k})  \p_1(\z_{1k})\in\R^6$};
\node[pbox] (pol2) at (3.35,-3.5) {$\boldsymbol{w}_{2k}\sigma(\x^{\top}\z_{2k}+\boldsymbol b_{2k})  \p_2(\z_{2k})\in\R^6$};

\node[sum] (sum1) at (-3.35,-5.0) {$\sum_k$};
\node[sum] (sum2) at (3.35,-5.0) {$\sum_k$};
\node[sum] (sumi) at (0,-5.0) {$\sum_i$};
\node[io, minimum width=3.5cm] (out) at (0,-6.3) {$(\E(\x),\B(\x))\in\R^6$};

\draw[conn] (input.south) -- (split);
\draw[conn] (split) -| node[above]{$i=1$} (pol1.north);
\draw[conn] (split) -| node[above]{$i=2$}(pol2.north);
\draw[conn] (pol1) -- (sum1);
\draw[conn] (pol2) -- (sum2);
\draw[conn] (sum1) -- (sumi);
\draw[conn] (sum2) -- (sumi);
\draw[conn] (sumi) -- (out);

\coordinate (k2top) at ($(pol2.north east)+(0.15,0.05)$);
\coordinate (k2bot) at ($(pol2.south east)+(0.15,-0.05)$);

\draw[
  decorate,
  decoration={brace,amplitude=6pt,mirror}
] (k2bot) -- node[right=9pt, yshift=-30pt, rotate=90] {$k=1,\dots,2W$} (k2top);
\end{tikzpicture}
\caption{Graphical representation of the forward pass \eqref{eq:forward_pass} of \methodname{}, here expressed much like in \eqref{eq:model} in terms of each timespace frequency $\z_{ik}=\Z_{ik\bullet}\in\R^4$ for $i=1,2$, $k=1,\ldots,2W$.}\label{fig:forward_pass}
\end{figure}






\paragraph{\methodname{} training setup.}
In all \methodname~experiments in Table~\ref{tab:race_to_5percent}, we use the AdamW optimizer with a learning rate of $5\times 10^{-2}$, a weight decay of $5\times 10^{-5}$ and $(\beta_1,\beta_2)=(0.9,0.95)$. The activation function is $\sigma=\tanh$. We initialize all weights with the Xavier normal initialization with a standard gain of $5/3$, except for the biases which are initialized with $0$. We use a model width of $w=$ 10K (corresponding to 200K trainable parameters) and train with randomized batches of size 1K over 2K training points. Validation error calculations are over 10K points sampled from the validation domain. A cosine learning rate scheduler was employed over the first 10K epochs of training with $\eta_{\min{}} = 0$.

In the boundary condition setup, $1/7$ of the training points are sampled from the interior of the training spatial domain $(x,y,z) \in [0,1]^3$ with $t=0$ (initial condition points - IC points), while $1/7$ of the training points are sampled from each of the six faces of the training spatial domain $(x,y,z) \in \partial [0,1]^3$ with $t\in[0,1]$ (boundary condition points - BC points). Validation data are drawn from $(t,x,y,z)\in[0,1]^4$. We mask the targets during training such that loss for BC points is computed only for the non-normal components of the electric field, effectively minimizing only $\text{MSE}\left(\E\times\n, \E_{\gt}\times\n\right)$ for BC points. For instance, considering a BC point from the top face of $[0,1]^3$, we compute the loss during training only over the components $\E_x$ and $\E_y$ as the normal direction $\E_z$ is not part of the boundary condition. We never include the $\B$ field for BC points--it is only used for the IC points.

In the IC setup, all training points are sampled from the interior of the training spatial domain $(x,y,z) \in [0,1]^3$ with $t=0$. Validation data are drawn from $(x,y,z)\in[0.2,0.8]^3$ with $t\in[0,0.1]$. No special masking is done.
\subsection{State-of-the-art Alternative Approaches}
\label{app:implementation_details}
The three main approaches we compare with are PINNs, Gaussian Processes (S-EPGP), and FEMs.

\paragraph{PINNs.}

We benchmark \methodname{} against a Physics-Informed Neural Network (PINN)~\cite{raissi2019physics} on the time-dependent Maxwell system in Eq.~\ref{eq:maxwell}. 
Unless otherwise specified, all PINN experiments use the same architecture and optimization setup: a 6-layer MLP with hidden size 128, Adam optimizer with learning rate \(10^{-2}\), and joint optimization of all active loss terms. For physics supervision, we construct a uniform Cartesian grid over \([0,1]^4\) and randomly sample 50K collocation points. For data supervision, initial-condition points are sampled from the \(t=0\) slice using a \(50\times 50\times 50\) Cartesian grid over \((x,y,z)\in[0,1]^3\), from which 10K training points are randomly selected.

In this work, we consider two settings that align with the problem formulations. In the~\ref{eq:initial_conditions} setting, the model is trained with initial data, PDE, and divergence losses, with weights \(\lambda_{\text{Data}}=10\), \(\lambda_{\text{PDE}}=1\), and \(\lambda_{\text{div}}=2\), and no boundary supervision is imposed. 
In the~\ref{eq:boundary_condition}  setting, we additionally enforce the boundary loss with weight \(\lambda_{\text{BC}}=10\), while keeping the other loss weights unchanged. Here, PDE loss refers to the residuals of the first two equations in the~\ref{eq:maxwell}; and \(\lambda_{\text{div}}\) is the weight for divergence loss corresponding to the residuals for the last two equations in the~\ref{eq:maxwell}. Boundary samples are drawn from the spatial boundary of the unit cube over \(t\in[0,1]\), and validation includes 10K sampled boundary points together with evaluation over the full space-time domain \([0,1]^4\).

\paragraph{S-EPGP.}

We benchmark \methodname{} against the S-EPGP baseline~\cite{harkonen2023gaussian}, following the same formulation and Maxwell-consistent feature construction as in~\cite{harkonen2023gaussian,li2025gaussian}. In all experiments, S-EPGP is trained by minimizing the negative log marginal likelihood (NLML), and relative error is computed jointly over all six electromagnetic components. Unless otherwise specified, all S-EPGP experiments use the same training setup: Adam optimization on an A100 GPU, \(n_{\mathrm{MC}}=300\) frequency samples, and \(8n_{\mathrm{MC}}=2400\) spectral features. Initial-condition supervision is constructed from the same \(t=0\) Cartesian grid as above, from which we sample 2K locations, corresponding to 12K scalar observations over the six field components.

We consider two settings aligned with the formulations in this work. In the~\ref{eq:initial_conditions} setting, the model is trained using only observations at \(t=0\).
Since performance degrades substantially when extrapolating to longer time horizons, evaluation for this baseline is restricted to the interior region \([0,0.01]\times[0.2,0.8]^3\). We note that this is a shorter temporal interval than that used for \methodname{}, reflecting the limited stability of the finite-feature S-EPGP approximation. In the~\ref{eq:boundary_condition} setting, we additionally impose boundary constraints using 1K sampled points per spatial face. The tangential electric-field constraints are incorporated by augmenting the S-EPGP feature matrix and observation vector, and evaluation is performed over the full space-time domain \([0,1]^4\).

The relatively weak performance of S-EPGP, especially in the~\ref{eq:boundary_condition} setting, likely arises from the finite feature approximation. While the Maxwell-consistent basis is theoretically expressive in the infinite-feature limit, \(n_{\mathrm{MC}}=300\) appears insufficient to accurately represent the target solution over the full constrained domain. As a result, when initial and boundary constraints are combined in the augmented system, basis mismatch can lead to tension between the two objectives, whereas fitting the initial slice alone is substantially easier. 

\paragraph{FEM.}

We benchmark \methodname{} against a classical structure-preserving finite element baseline~\cite{nedelec1980mixed,arnold2006finite} implemented in DOLFINx/FEniCS~\cite{logg2012fenics}. The FEM baseline is used only in the boundary condition setting, since the no boundary condition setting considered in this paper is a sparse-data extrapolation task rather than a fully specified forward initial-boundary value problem. Consistent with the structure-preserving FEM perspective discussed in Section~2, the solver uses a mixed $H(\mathrm{curl}) \times H(\mathrm{div})$ discretization of the source-free Maxwell system~\eqref{eq:maxwell}, where the electric field $\E$ is represented with first-kind N\'ed\'elec elements and the magnetic field $\B$ with Raviart--Thomas elements.

Time integration is performed with a Crank--Nicolson scheme. Since the benchmark problem is linear and the mesh is fixed, the system matrix is assembled once and reused throughout the time-stepping loop. All FEM runs use a structured tetrahedral mesh of $[0,1]^3$, with $N$ subdivisions per coordinate direction and $6N^3$ tetrahedra. We use matching polynomial degree-2 spaces for $\E$ and $\B$, time step $\Delta t = 10^{-2}$, and a PETSc GMRES solver with block-Jacobi ILU preconditioning.

Initial data are imposed by interpolating the exact electric and magnetic fields at $t=0$ into the corresponding finite element spaces. The tangential electric boundary condition $\E \times \n$ is imposed strongly on $\partial[0,1]^3$ at every time step using
the exact trace; no boundary condition is imposed directly on $\B$. Validation is performed on 10K randomly sampled spacetime points in $[0,1]^4$, using the same relative error metric as the other methods and computing the error jointly over all six electromagnetic components. We additionally report a final-step FEM residual computed from the last two Crank--Nicolson time levels.

The results of our FEM implementation on the four ground truths (Appendix~\ref{app:ground_truths}) are reported in Table~\ref{tab:race_to_5percent}. Note that the performance of FEM on the \textit{Radial Waves} solution was very poor. This is due to a numerical artifact at $(0,0,0,0)$, see Figure~\ref{fig:radial_waves} (top-left). The ML methods we use do not seem to be affected by the artifact due to the random sampling of data. To demonstrate that our FEM converges, albeit slowly, we approximate the \textit{Radial Waves} ground truth by shifting the domain away from the singularity, taking $\Omega=(0.1,1.1)^3$. FEM reaches sub-5\% validation error, but stabilizes well above the PDE residual threshold of $0.01$ reported (Table~\ref{tab:radial-waves-shifted-fem}), thus remaining far from the performance of \methodname{} for \textit{Radial Wave} with boundary conditions in Table~\ref{tab:race_to_5percent}.
\begin{table}[th!]
  \caption{Additional experiment for FEM. We repeat the \textit{Race to 5\%} experiment in Table~\ref{tab:race_to_5percent} on the \textit{Radial Waves} ground truth. Spacetime domain $(0,1)\times(0.1,1.1)^3$, designed to avoid the numerical artifact at the origin in Figure~\ref{fig:radial_waves}(top-left).}
  \label{tab:radial-waves-shifted-fem}
  \centering
  \begin{tabular}{crrrrr}
    \toprule
    \textbf{Ground Truth} & \textbf{Method} & \textbf{Space} & \textbf{Val Error (\%)} & \textbf{Residual Error} & \textbf{Compute (s)} \\
    \midrule
    \textit{Radial Waves} & FEM & $(0.1,1.1)^3$ & $\bfnum{<5.00}$ & $0.140$ & $287.4$ \\
    \bottomrule
  \end{tabular}
\end{table}

\subsection{Physics-Informed Neural Network (PINN) Baseline: QPINN and Evo-PINN}
\label{app:pinn_baseline}
\begin{table}[b]
  \caption{Validation Error, Residual Error, and Compute training time to reach the best relative validation error.}
  \label{Tab: QPINNandEvo}
  \centering
  \begin{tabular}{crccccc}
    \toprule
    \multirow{6}{*}{\rotatebox[origin=c]{90}{\textit{PlaneWave}\hspace*{1em}}}
    & \textbf{Method}                       & \textbf{BC}       & \textbf{Validation Error} & \textbf{Residual Error}       & \textbf{Compute} & \textbf{Device}        \\
    \midrule
    & \multirow{2}{*}{QPINN\cite{chen2026quantum}}        & \checkmark &  $48.07\%$        & $ 9.16\times 10^{-2}$ & $348.55 $s & \multirow{2}{*}{A100} \\
    &                              & \xmark     &  $96.06\%$        & $ 8.38\times 10^{-2}$ &  $ 37.26 $s &  \\
    \cdashline{2-7}
    & \multirow{2}{*}{Evo-PINN\cite{11353124}}         & \checkmark & $100.17\%$        & $ 7.7\times 10^{-3}$ &  35.87s & \multirow{2}{*}{CPU} \\
    &                              & \xmark     & $99.10\%$     & $1.6 \times 10^{-1}$         & $146.29s$ &  \\
   
    \midrule
    \multirow{6}{*}{\rotatebox[origin=c]{90}{\textit{RadialWave}\hspace*{-2.2em}}}
    & \multirow{2}{*}{QPINN\cite{chen2026quantum}}        & \checkmark &  $99.68\%$        & $6.46\times 10^{-4}$ &   $ 7.14 $s & \multirow{2}{*}{A100}\\
    &                              & \xmark     &   $52.37\%$        & $ 2.47\times 10^{-1}$ &  $ 36.23 $s \\
    \cdashline{2-7}
    & \multirow{2}{*}{Evo-PINN\cite{11353124}}         & \checkmark & $101.81\%$        & $ 5.1\times 10^{-3}$ &  $35.81$s & \multirow{2}{*}{CPU}\\
    &                              & \xmark     & $96.30\%$     & $7.3\times 10^{-2}$         & $87.77s$ \\

    \midrule
    \multirow{6}{*}{\rotatebox[origin=c]{90}{\textit{Hopf}\hspace*{-2.5em}}}
    & \multirow{2}{*}{QPINN\cite{chen2026quantum}}        & \checkmark &   $36.92\%$        & $ 6.42\times 10^{-2}$ &  $ 379.71 $s &\multirow{2}{*}{A100} \\
    &                              & \xmark     &  $8.71\%$        & $4.84 \times 10^{-2}$ &  $ 1186.49 $s \\
    \cdashline{2-7}
    & \multirow{2}{*}{Evo-PINN\cite{11353124}}         & \checkmark & $83.86\%$        & $ 1.0\times 10^{-3}$ &  18.85s & \multirow{2}{*}{CPU}\\
    &                              & \xmark     & $46.88\%$     & $1.9\times 10^{-2}$        & 73.1s \\

    \midrule
    \multirow{6}{*}{\rotatebox[origin=c]{90}{\textit{Random}\hspace*{-2em}}}
    & \multirow{2}{*}{QPINN\cite{chen2026quantum}}        & \checkmark &   $23.60\%$        & $ 3.83\times 10^{-2}$ & $956.27 $s & \multirow{2}{*}{A100}\\
    &                              & \xmark     &   $37.30\%$        & $ 4.46\times 10^{-2}$ &  $ 913.69 $s \\
    \cdashline{2-7}
    & \multirow{2}{*}{Evo-PINN\cite{11353124}}         & \checkmark & $103.06\%$        & $ 9.9\times 10^{-4}$ & 20.95s & \multirow{2}{*}{CPU}\\
    &                              & \xmark     & $73.90\%$     & $2.9\times 10^{-1}$         &     $ 2549s$ \\

    \bottomrule
  \end{tabular}
\end{table}

As discussed in Section~\ref{RelatedWork}, we further compare our in-house PINN baseline with two representative PINN-style methods, QPINN~\cite{chen2026quantum} and Evo-PINN~\cite{11353124}. Among these PINN-based baselines, our in-house PINN consistently achieves the strongest performance and is therefore used as the primary PINN comparator in the main text. For completeness, we report the detailed results of QPINN and Evo-PINN in Table~\ref{Tab: QPINNandEvo}.

To the best of our knowledge, no prior PINN work with publicly available code directly targets the general homogeneous, constant-coefficient 4D time-dependent Maxwell problem in Equation~\eqref{eq:maxwell}. We therefore include QPINN and Evo-PINN as the closest reproducible alternatives we could identify. QPINN is the most closely related task-specific baseline, but it is developed for a 2D hybrid quantum-classical Maxwell formulation with several problem-specific design choices. Evo-PINN, in contrast, is a recent general framework aimed at improving PINN optimization rather than a method specialized for Maxwell equations in our formulation. Although these baselines are not fully matched to our problem, they provide useful reference points for evaluating how related or more general PINN methodologies transfer to our setting. In our in-house PINN experiments, we do not add any additional techniques, but follow the tight alignment with the exact~\ref{eq:maxwell} formulation and training regime.

We adapt QPINN by retaining its core idea of inserting a parameterized quantum circuit near the output of an otherwise classical PINN, while replacing the original 2D three-field formulation \((E_z,H_x,H_y)\) with our six-component representation \((B_1,B_2,B_3,E_1,E_2,E_3)\). Our implementation uses a 6-layer MLP with hidden width 128, whose penultimate features are projected to a 7-qubit, 4-layer quantum circuit with Pauli-\(Z\) readout, using the strongly entangling ansatz and \texttt{acos} angle scaling. We also retain the additional global energy regularization used in the original QPINN study. Since several components of the original 2D setup are not directly transferable to our problem, this baseline should be viewed as an approximate transfer of the QPINN design rather than a matched reproduction.

We adapt Evo-PINN by replacing standard gradient-based PINN training with a CMA-ES optimizer implemented through EvoJAX, while keeping the PINN itself as a classical network over the same Maxwell variables and residual terms used in our benchmark. The resulting model is a smaller MLP trained by evolutionary search under a physics-informed objective composed of data, PDE, divergence, and, when applicable, boundary losses. This baseline follows the optimization-oriented spirit of Evo-PINN, but should be interpreted as an Evo-PINN-inspired evolutionary PINN rather than a like-for-like reproduction of a method designed for our Maxwell formulation. Moreover, under our computational budget, the evolutionary baseline uses both a smaller model and a more limited training budget than our in-house PINN, which likely further constrains its final performance.

\section{List of Ground Truths}\label{app:ground_truths}
We test each solver (ours, PINN, GP, FEM) using the  four ground truths that we describe in detail below. These vector fields are chosen to challenge every solver and, in particular, none of them  favor \methodname{} a priori. Moreover, these ground truths are very different mathematically from each other. \methodname{} substantially outperformed competitors for any choice of  ground truth and across all experiments we performed.
\begin{figure}[h]
    \centering
    \includegraphics[width=0.7\textwidth]{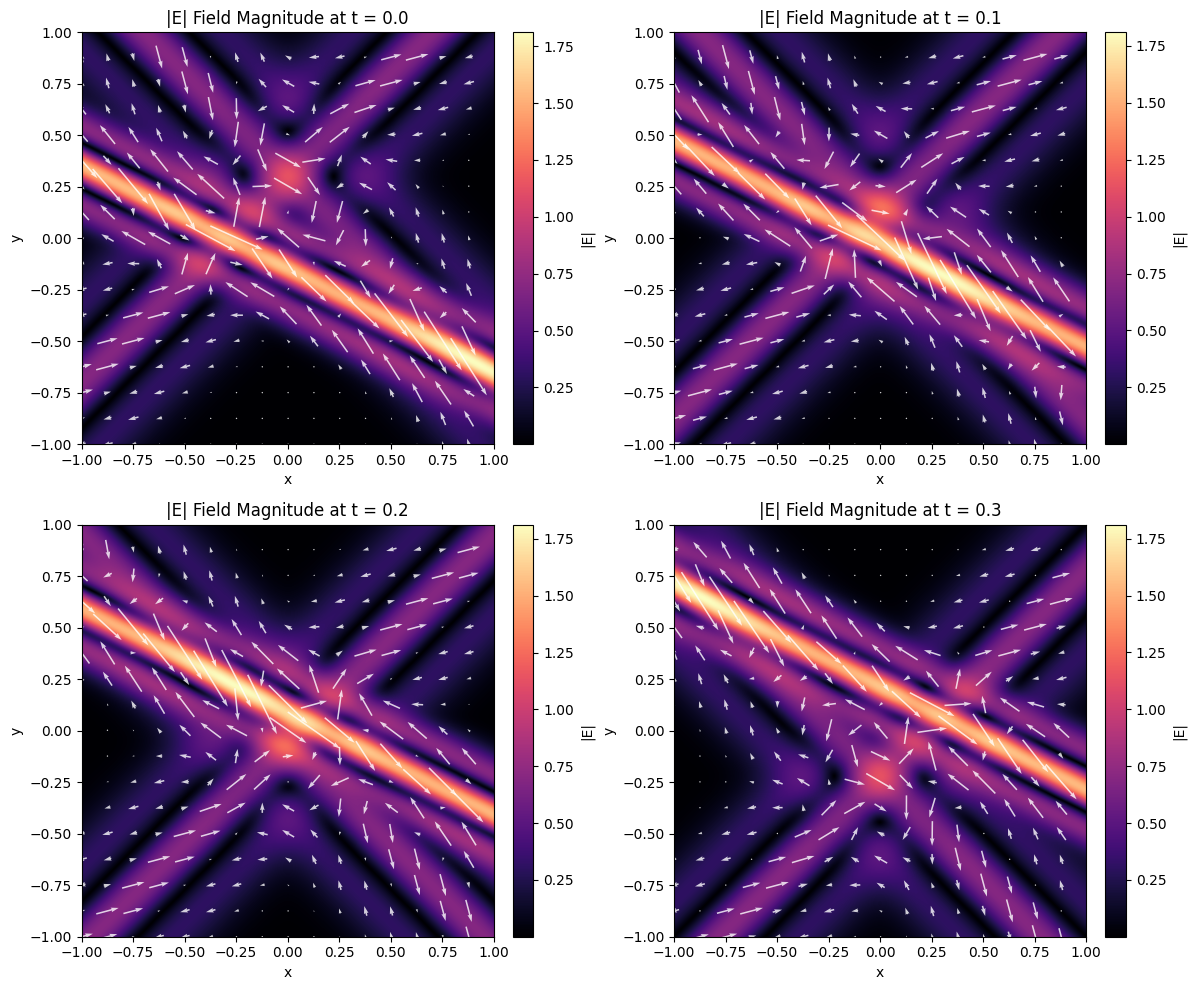}
    
    \caption{Time evolution of the magnitude of the electric field for the \textit{Plane Waves} solution.}
    \label{fig:plane_waves} 
\end{figure}
\paragraph{Plane Waves.} The solution of \eqref{eq:maxwell} with simplest structure that we use is a superposition of three plane wave solutions with profile derived from a  squared exponential
 $f(s) = 0.01 e^{-10(s - 0.3)^2}$. Let
\begin{equation*}
    F(s)=f''(s) = -0.2 e^{-10(s - 0.3)^2} \left[ 1 - 20(s - 0.3)^2 \right]
\end{equation*}
The directions of the three plane waves $\z$ must satisfy the light cone restriction $z_0^2=z_1^2+z_2^2+z_3^2$. We consider $\z\in\{(\sqrt{3},1,1,1),(\sqrt{3},-1,1,1),(\sqrt{6},-1,-2,1)\}$. Let
\begin{align*}
    s_1 &= \sqrt{3}t + x + y + z \\
    s_2 &= \sqrt{3}t - x + y + z \\
    s_3 &= \sqrt{6}t - x - 2y + z
\end{align*}

Let us define the evaluated functions as $F_1 = F(s_1)$, $F_2 = F(s_2)$, and $F_3 = F(s_3)$.
Recall that $\x=(t,x,y,z)$. The \textbf{electric field} $\E$ is given by
\begin{align*}
    E_1(\x) &=  (1 - \sqrt{3})F_1 - (1 + \sqrt{3})F_2 + (1 - 2\sqrt{6})F_3  \\
    E_2(\x) &=  (1 + \sqrt{3})F_1 + (1 - \sqrt{3})F_2 + (2 + \sqrt{6})F_3  \\
    E_3(\x) &=   -2F_1 - 2F_2 + 5F_3,
\end{align*}
while the \textbf{magnetic field} $\B$ is given by
\begin{align*}
    B_1(\x) &=  (\sqrt{3} + 1)F_1 + (\sqrt{3} - 1)F_2 + (2\sqrt{6} + 1)F_3 \\
    B_2(\x) &=  (1 - \sqrt{3})F_1 + (1 + \sqrt{3})F_2 + (2 - \sqrt{6})F_3  \\
    B_3(\x) &=   -2F_1 - 2F_2 + 5F_3.
\end{align*}
Snapshots of the evolution of the magnitude of the electric field can be found in Figure~\ref{fig:plane_waves}.

\paragraph{Radial Waves.} We next consider the evolution of a symmetric radial wave, modelling a Gaussian pulse at $(0,0,0)$ in space leading to waves which dissipate outward. This solution is fundamentally different than our network \eqref{eq:model}.

Let $r = \sqrt{x^2 + y^2 + z^2}$ and $s = r - t$.
The radial profile is given by
\begin{align*}
    f(s) &= 0.01 e^{-10(s - 0.7)^2} \\
    f'(s) &= -0.2(s - 0.7) e^{-10(s - 0.7)^2} \\
    f''(s) &= -0.2 e^{-10(s - 0.7)^2} \left[ 1 - 20(s - 0.7)^2 \right]
\end{align*}
We will use the following auxiliary scalar fields
\begin{align*}
    g(s) &= \frac{f'(s)}{r^2} - \frac{f(s)}{r^3} \\
    h(s) &= \frac{f''(s)}{r^3} - \frac{3f'(s)}{r^4} + \frac{3f(s)}{r^5} \\
    q(s) &= -\frac{f''(s)}{r^2} + \frac{f'(s)}{r^3}
\end{align*}
With this notation, the \textbf{electric field} $\E$ is given by
\begin{align*}
    E_1(\x) &= 10 \left[ xzh - yq \right] \\
    E_2(\x) &= 10 \left[ yzh + xq \right] \\
    E_3(\x) &= 10 \left[ g + z^2h - \frac{f''(s)}{r} \right]
\end{align*}
and the \textbf{magnetic field} $\B$ is
\begin{align*}
    B_1(\x) &= 10 \left[ yq + xzh \right] \\
    B_2(\x) &= 10 \left[ -xq + yzh \right] \\
    B_3(\x) &= 10 \left[ -2g - (x^2 + y^2)h \right]
\end{align*}
Snapshots of the evolution of the magnitude of the electric field can be found in Figure~\ref{fig:radial_waves}.
\begin{figure}[htbp]
    \centering
    \includegraphics[width=0.7\textwidth]{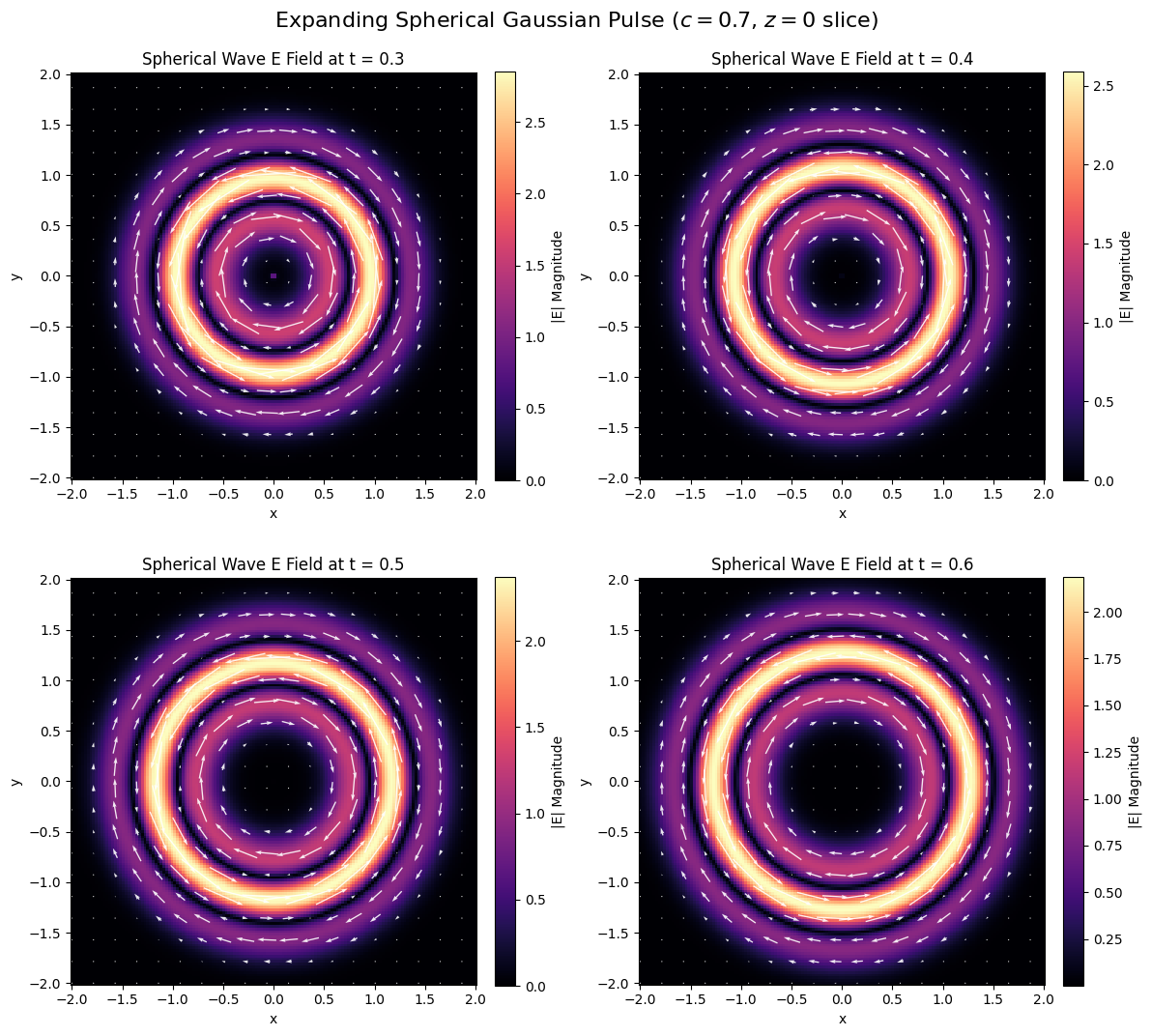}
    
    \caption{Time evolution of the magnitude of the electric field for the \textit{Radial Waves} solution.}
    \label{fig:radial_waves}
    \end{figure}

    \paragraph{Hopf Fibration.} We consider an electromagnetic knot or Hopfion, which is a smooth rational function (quotient of polynomials without singularities). Thus its nature is algebraic, which makes it completely different from the other three solutions we consider.
\begin{figure}[b]
    \centering
    \includegraphics[width=0.95\textwidth]{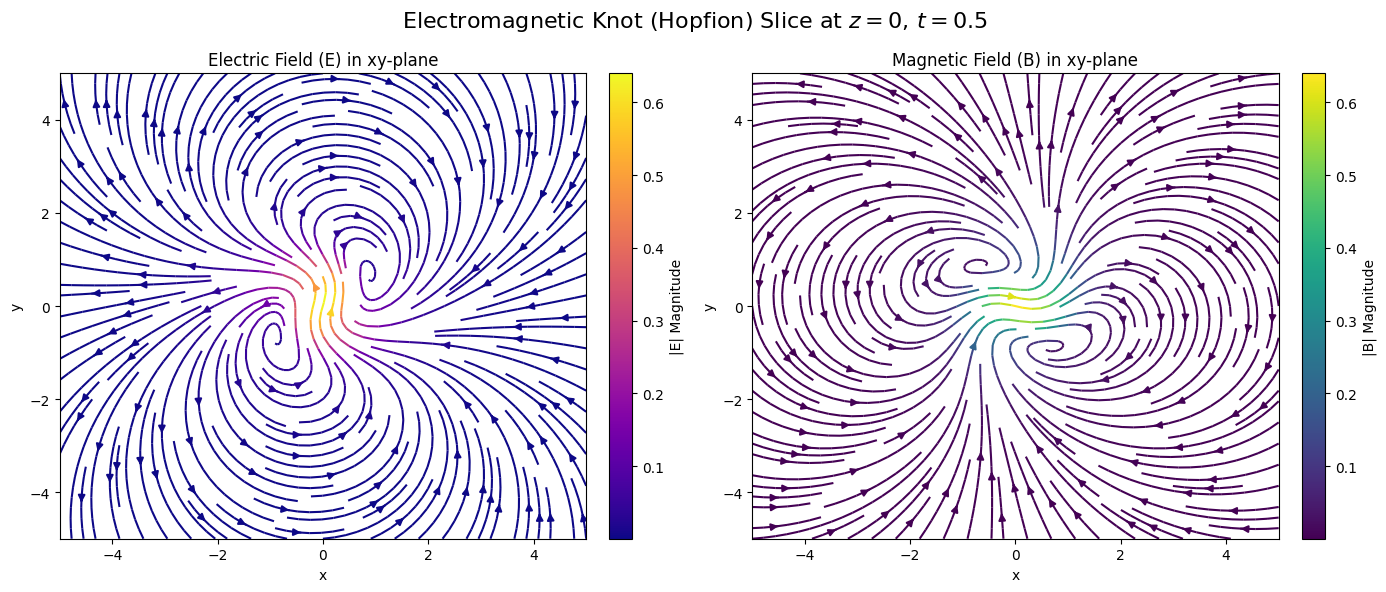}
    
    \caption{Planar streamlines of the \textit{Hopf Fibration} solution.}
    \label{fig:hopf_fibration}
    \end{figure}
    Consider the following auxiliary polynomial quantities:
\begin{align*}
       &A = 1 + x^2 + y^2 + z^2 - t^2,\qquad
 T_1 = A^3 - 12t^2A, \\
    &T_2 = 8t^3 - 6tA^2 \qquad
    D = (A^2 + 4t^2)^3.
\end{align*}
With this notation, the \textbf{electric field} $\E$ is given by
\begin{align*}
    E_1(\x) &=D^{-1} [{T_1 \left[ (t - z)^2 - 1 - x^2 + y^2 \right] - T_2 \left[ 2xy - 2(t - z) \right]}] \\
    E_2(\x) &= D^{-1}[{T_1 \left[ -2xy - 2(t - z) \right] - T_2 \left[ 1 - (t - z)^2 - x^2 + y^2 \right]}] \\
    E_3(\x) &= D^{-1}[{T_1 \left[ 2x(t - z) - 2y, \right] + T_2 \left[ 2x + 2y(t - z) \right]}]
\end{align*}
and the \textbf{magnetic field} $\B$ is
\begin{align*}
    B_1(\x) &= D^{-1}[{T_2 \left[ (t - z)^2 - 1 - x^2 + y^2 \right] + T_1 \left[ 2xy - 2(t - z) \right]}] \\
    B_2(\x) &= D^{-1}[{T_2 \left[ -2xy - 2(t - z) \right] + T_1 \left[ 1 - (t - z)^2 - x^2 + y^2 \right]}] \\
    B_3(\x) &= D^{-1}[{T_2 \left[ 2x(t - z) - 2y \right] - T_1 \left[ 2x + 2y(t - z) \right]}].
\end{align*}
Beautiful planar snapshots of the streamlines of the electric and magnetic fields can be found in Fig.~\ref{fig:hopf_fibration}.

\paragraph{Random Solution.}   For the last solution we add 100 plane waves similar to those used in the \textit{Plane Waves} solution. The direction and shifts of these plane waves are chosen randomly as we explain below. This ensures that the solutions we obtain are indistinguishable from an arbitrary solution of \eqref{eq:maxwell}.
\begin{figure}[h]
    \centering
    \includegraphics[width=0.7\textwidth]{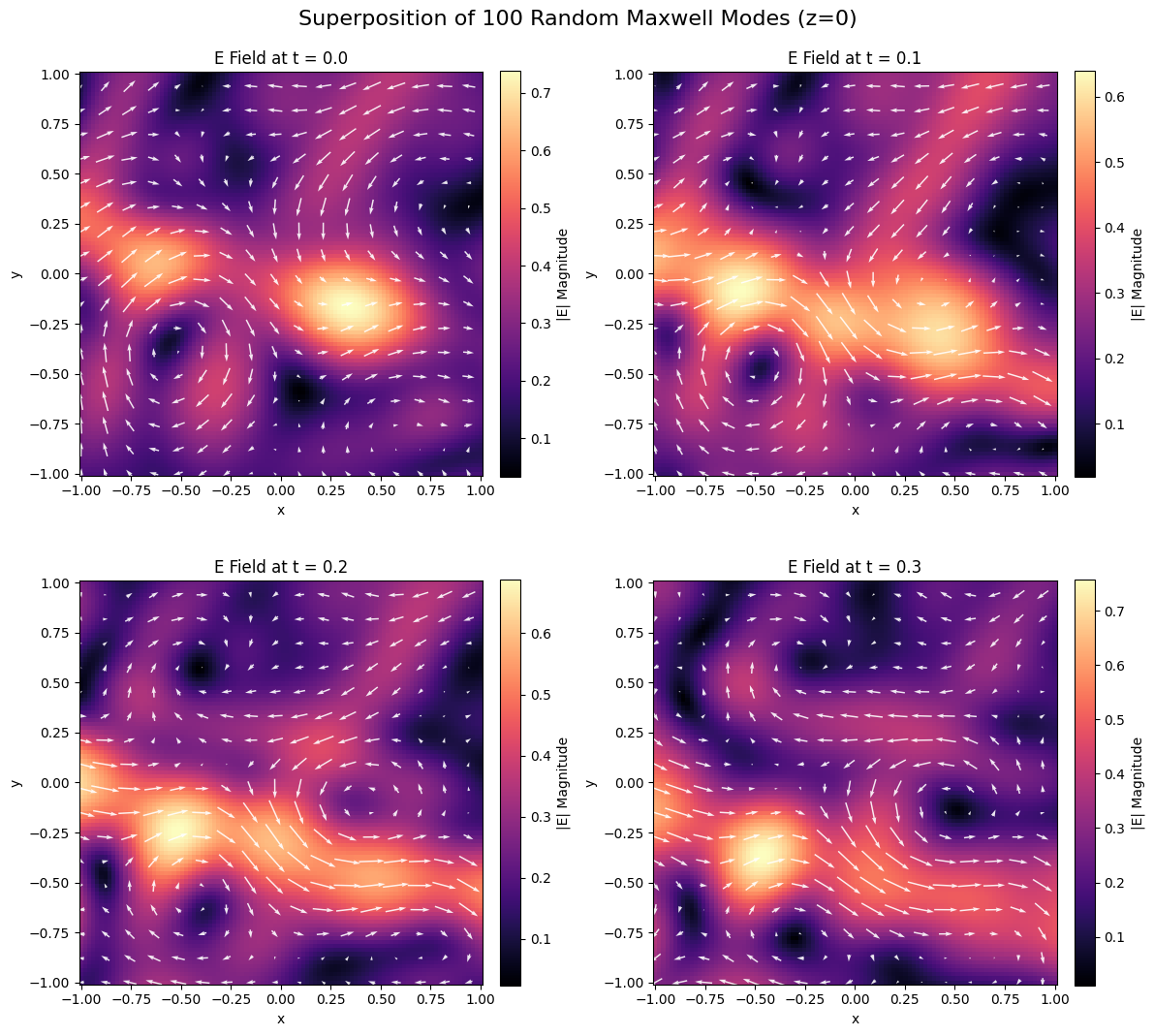}
    
    \caption{Time evolution of the magnitude of the electric field for the \textit{Random Solution}.}
    \label{fig:random_solution}
    \end{figure}

For each  $k \in \{1, 2, \dots, 100\}$, we sample  independently $z_{k1},z_{k2},z_{k3}\sim\mathcal N({0},{0.1})$ and $b_k\sim \mathcal N(0,1)$ and let $z_{k0}=\sqrt{z_{k1}^2+z_{k2}^2+z_{k3}^2}$. Let
\begin{align*}
    &s_k =  z_{k0}t + z_{k1}x + z_{k2}y + z_{k3}z + b_k\\
    &F(s) = -0.2 e^{-10(s - 0.3)^2} \left[ 1 - 20(s - 0.3)^2 \right]
\end{align*}
With this notation, the \textbf{electric field} $\E$ is given by
\begin{align*}
    E_1(\x) &= \sum_{k=1}^{100} \left[ z_{k1}z_{k3} - z_{k2}z_{k0} \right] F(s_k) \\
    E_2(\x) &= \sum_{k=1}^{100} \left[ z_{k2}z_{k3} + z_{k1}z_{k0} \right] F(s_k) \\
    E_3(\x) &= \sum_{k=1}^{100} \left[ z_{k3}^2 - z_{k0}^2 \right] F(s_k)
\end{align*}
and the \textbf{magnetic field} $\B$ is
\begin{align*}
    B_1(\x) &= \sum_{k=1}^{100} \left[ z_{k0} z_{k2} + z_{k1}z_{k3} \right] F(s_k) \\
    B_2(\x) &= \sum_{k=1}^{100} \left[ -z_{k0} z_{k1} + z_{k2}z_{k3} \right] F(s_k) \\
    B_3(\x) &= \sum_{k=1}^{100} \left[ -z_{k1}^2 - z_{k2}^2 \right] F(s_k).
\end{align*}
Snapshots of the evolution of the magnitude of the electric field can be found in Figure~\ref{fig:random_solution}.

\section{Proof of the Universal Approximation Theorem}\label{sec:proof_uat}
We first clarify what we mean by $L^2$-solutions of~\eqref{eq:maxwell}.  To do so, we define the spaces we are working with: Let $\Omega\subset\R^d$ be an open set; in our work $d=3,4$. We will  work with spaces of smooth functions 
$$
C^m(\Omega,\R^n)=\{u\colon\Omega\to\R^n\colon \partial^\alpha u\text{ uniformly continuous whenever }|\alpha|\leq m\}.
$$
To be clear, here $\alpha\in\mathbb{N}^d$ is a multi-index so
$$
\partial^\alpha=\partial_1^{\alpha_1}\partial_2^{\alpha_2}\ldots\partial_d^{\alpha_d}
$$
and $|\alpha|=\alpha_1+\alpha_2+\ldots+\alpha_d$. The spaces $C^m(\Omega,\R^n)$ are Banach spaces under the norms
$$
\|u\|_{C^m(\Omega,\R^n)}=\max_{|\alpha|\leq m,\x\in\Omega}|\partial^\alpha u(\x)|,
$$
where $|\cdot|$ denotes the $\ell^2$-norm on a finite dimensional inner product space (like $\R^n$).  No confusion will arise from using the same notation $|\cdot|$ for two types of length. The space $$C^\infty(\Omega,\R^n)=\bigcap_{m\geq 0}C^m(\Omega,\R^n)$$
is a Frech\'et space under the $C^m$-seminorms. Finally, we will work with the space $C_c^\infty(\Omega,\R^n)$ of compactly supported smooth functions, meaning that $u\in C_c^\infty(\Omega,\R^n)$ if $u\in C^\infty(\Omega,\R^n)$ and the set $\{\x\in\Omega\colon u(\x)\neq\boldsymbol{0}\}$ is compact (closed and bounded).

The typical space for error measurement is
$$
L^2(\Omega,\R^n)=\{u\colon\Omega\to\R^n\colon u\text{ measurable, }\int_\Omega|u(\x)|^2d \x<\infty\},
$$ 
We will also work with the Sobolev spaces $$H^m(\Omega,\R^n)=\{u\in L^2(\Omega,\R^n)\colon \partial^\alpha u\in L^2(\Omega,\R^n)\text{ whenever }|\alpha|\leq m\},$$
the space of functions with all derivatives of order at most $m$ belonging to $L^2$. In particular, $H^0=L^2$.  We say that $\partial^\alpha u\in L^2(\Omega)$ if there is a constant $C>0$ such that
$$
\int_\Omega u(\x)^\top \partial^\alpha v(\x)d \x\leq C\|v\|_{L^2(\Omega,\R^n)}\quad\text{for all }v\in C_c^\infty(\Omega,\R^n).
$$

The spaces $H^m(\Omega,\R^n)$ are Hilbert spaces with the inner product
$$
\langle u,v\rangle_{H^m(\Omega,\R^n)}=\sum_{|\alpha|\leq m}\int_{\Omega}\partial^\alpha u(\x)^\top\partial^\alpha  v(\x)d \x,
$$
which induces the norm
$$
\|u\|_{H^m(\Omega,\R^n)}=\sqrt{\sum_{|\alpha|\leq m}\int_\Omega|\partial^\alpha u(\x)|^2d \x}.
$$

An $L^2$-solution $(\E,\B)$ of~\eqref{eq:maxwell} in $\Omega\subset\R^4$ is simply a function $(\E,\B)\in L^2(\Omega,\R^6)$ that is a distributional solution. To define this here, write
$$
A(\partial)(\E,\B)=\begin{pmatrix}
    \partial_t\E-\nabla\times \B\\
    \partial_t\B+\nabla\times \E\\
    \nabla\cdot\E\\
    \nabla\cdot\B
\end{pmatrix},
$$
where $\partial=(\partial_t,\partial_x,\partial_y,\partial_z)$ and $\nabla=(\partial_x,\partial_y,\partial_z)^\top$. Then $A(\partial)(\E,\B)=\boldsymbol{0}$ in $\Omega$ if and only if
$$
\int_{\Omega} (\E(\x),\B(\x))^\top A(\partial)^\top \varphi(\x)d\x=0\quad\text{for all }\varphi\in C_c^\infty(\Omega,\R^8).
$$
In particular, if $(\E,\B)\in H^1(\Omega,\R^6)$, this distributional relation reduces by integration by parts to~\eqref{eq:model} expressed almost everywhere in $\Omega$.

We can now state our main theorem, extending Theorem~\ref{thm:spacetimeUAT} to various norms. We will only work with time-space domains of the form $(t_1,t_2)\times \Omega$ as it seems unphysical to allow for gaps in the time domain. However, the result below holds for any open set in $\R^4$. 
\begin{theorem}[Universal approximation for Maxwell's equations]
\label{thm:UAP}
Let $m\geq0$ be an integer, {$\sigma\in C^m(\R)$ be non-polynomial}, $t_1<t_2$, and $\Omega\subset\R^3$ be a bounded open   set.
Let $(\E_\gt,\B_\gt)$ be a $H^m$-solution of the homogeneous Maxwell equations~\eqref{eq:maxwell} on $\R^{4}$.
Then, for every $\varepsilon>0$, there exists a finite Maxwell network solution $(\E,\B)$ of the form~\eqref{eq:model} such that
\[
\|( \E_\gt, \B_\gt)-(\E,\B)\|_{H^m((t_1,t_2)\times\Omega)}<\varepsilon.
\]
If $m>1$, the same holds true with the $H^m$-norm replaced by the $C^{m-2}$-norm.
\end{theorem}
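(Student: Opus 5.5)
The argument has three steps. First reduce to approximating initial data at $t=t_1$. Then approximate that data in the initial-data norm by network initial data. Finally propagate the error through $(t_1,t_2)$ using an energy estimate. Every expression \eqref{eq:model} solves \eqref{eq:maxwell} by Lemma~\ref{lem:solution}, so the difference $(\E_\gt-\E,\B_\gt-\B)$ is itself a solution on $\R^4$. Since $\Omega$ is bounded, choose a ball $B_R\supset\Omega$. Set $R'=R+(t_2-t_1)$. Finite speed of propagation (speed $1$, \cite[Theorem 3.12]{Lax2006}), combined with the local energy identity $\partial_t\tfrac12(|\E|^2+|\B|^2)+\nabla\cdot(\E\times\B)=0$, gives
\[
\|(\E,\B)(t)\|_{L^2(B_R)}\le \|(\E,\B)(t_1)\|_{L^2(B_{R'})}\quad\text{for } t\in(t_1,t_2).
\]
Here $B_{R'}$ is the ball of radius $R'$, which contains the backward cone over $B_R$. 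The equations have constant coefficients, so $\partial^\alpha$ commutes with them. Applying the estimate to spatial derivatives, and converting time derivatives into spatial ones via the equations, the $H^m((t_1,t_2)\times\Omega)$ error is bounded by $C(t_2-t_1)^{1/2}$ times the $H^m(B_{R'})$ error of the data at $t=t_1$. For $H^m$ solutions I first mollify, which preserves the equations, and then pass to the limit. It therefore suffices to make the initial-data error small in $H^m(B_{R'})$. For $m>1$ the $C^{m-2}$ statement then follows by Sobolev embedding $H^m\hookrightarrow C^{m-2}$ on the Lipschitz domain $(t_1,t_2)\times B_R$, after enlarging $\Omega$ to the ball.

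Second, I would approximate the initial data. The data $(\E_0,\B_0)=(\E_\gt,\B_\gt)(t_1,\cdot)$ is divergence free. Multiply it by a cutoff $\chi\equiv1$ on $B_{R'}$ and project onto divergence-free fields by a Leray projection; this gives compactly supported divergence-free data in $H^m(\R^3)$ that agrees with the original data near $B_{R'}$ up to arbitrarily small error. Then use the Fourier transform. The solution with this data is a superposition over $\xi\in\R^3$ of plane waves $e^{i(\xi\cdot x\pm|\xi|t)}v_\pm(\xi)$, with $v_\pm(\xi)$ lying in the two-dimensional eigenspaces of the symbol. These eigenspaces are exactly spanned by $\p_1(\z),\p_2(\z)$ with $\z=(\pm|\xi|,\xi)$, away from the null set where the multipliers degenerate. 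For example, $\p_1$ degenerates when $\xi\parallel e_3$ and $\p_2$ degenerates when $\xi\parallel e_2$, but the pair still spans generically. Truncating the frequency integral to a compact set avoiding these degeneracies and discretizing by a Riemann sum approximates the data in $H^m(B_{R'})$ by finite sums $\sum c\,\cos(\z\cdot\x+\phi)\p_i(\z)$ with $\z$ on the light cone. Each such sum is a network with activation $\cos$.

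Third, I would replace $\cos$ by $\sigma$. For fixed admissible $\z$ and $i$, the term $s\mapsto \cos(\lambda s+\phi)$ is a profile function $g$ composed with $\x^\top\z$. By the classical one-dimensional universal approximation theorem for non-polynomial $\sigma\in C^m$ (Leshno–Pinkus, in its $C^m$ form), $g$ can be approximated in $C^m$ on a compact interval by $\sum_k a_k\sigma(\mu_k s+\beta_k)$. The snag is that rescaling changes the frequency $\mu_k\z$. This is harmless because the cone is invariant under scaling and $\p_i$ is homogeneous of degree $2$, so $\p_i(\mu\z)=\mu^2\p_i(\z)$; the factor $\mu_k^{-2}$ can be absorbed into $w$, and negative $\mu$ is handled by $j=-1$. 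This gives terms exactly of the form \eqref{eq:model}. I expect the main obstacle to be the second step: establishing in a stable quantitative way that the span of $\p_1,\p_2$ covers the full Maxwell eigenspaces up to the degenerate set, and justifying the Fourier truncation and discretization in $H^m$ on the bounded ball. If there is trouble, I would fall back on a Hahn–Banach density argument. A functional annihilating all such plane-wave solutions on $B_{R'}$ has Fourier transform vanishing on the characteristic variety with the Noetherian multiplier directions. The Ehrenpreis–Palamodov theorem then forces it to annihilate all solutions.
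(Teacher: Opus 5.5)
\textbf{Gap: the $C^{m-2}$ claim.} The embedding $H^m\hookrightarrow C^{m-2}$ that you invoke on $(t_1,t_2)\times B_R$ is false. That domain is four-dimensional, and there $H^m\hookrightarrow C^k$ requires $m>k+2$, so $k=m-2$ is exactly the failing borderline case. For instance, $\log\log(e/|x|)$ lies in $H^2$ of a 4D ball but is unbounded. A generic spacetime embedding only gives $C^{m-3}$.

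What rescues the statement is the hyperbolic structure, which is how the paper argues in Step 5. Your cone inequality already holds for each fixed $t$. Applied to spatial derivatives, it gives $\sup_{t\in(t_1,t_2)}\|(\E,\B)(t)\|_{H^m(B_R)}\le\|(\E,\B)(t_1)\|_{H^m(B_{R'})}$. Now use the \emph{three}-dimensional embedding $H^m(B_R)\hookrightarrow C^{m-2}(\overline{B_R})$ slice by slice; it is valid since $m>(m-2)+3/2$. Finally, convert $\partial_t^a\partial^\alpha$ with $a+|\alpha|\le m-2$ into spatial derivatives of the same order via the equations.

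\textbf{Two corrections in your second step.} First, the degenerate set is larger than your examples suggest. The $E_1$-components of $\p_1(\z)$ and $\p_2(\z)$ are $-z_1z_3$ and $z_1z_2$. So on the \emph{whole plane} $\{\xi_1=0\}$ neither multiplier can produce the polarization $\E\parallel e_1$, and $\p_1(\pm|\xi|,\xi)$, $\p_2(\pm|\xi|,\xi)$ are parallel there; at $\xi=(0,1,1)$ they coincide. Excising neighbourhoods of the $e_2$- and $e_3$-axes therefore leaves points where your inversion fails. You must excise a slab $\{|\xi_1|<\epsilon\}$. This is harmless because the plane is null. On $\{|\xi_1|\ge\epsilon,\ |\xi|\le M\}$, the paper's Step 3 identity $\mathrm{im}\,P(\xi)=\ker R(\xi)$ for $\xi_1\neq0$ gives a continuous left inverse, which is the stability you were worried about. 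You then still approximate the $L^2$ coefficient $(P^\top P)^{-1}P^\top\hat u$ by a continuous function before taking Riemann sums.

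Second, the Leray projection of $\chi\E_0$ is not compactly supported. On $B_{R'}$ it differs from $\E_0$ by $\nabla\Delta^{-1}(\nabla\chi\cdot\E_0)$. That term is not small for a fixed cutoff; it becomes small only as the cutoff radius goes to infinity, using global square-integrability of the data. Your Fourier-integral route needs no compact support at all. If the slice lies in $H^m(\R^3)$, transform it directly. This holds for a.e.\ time; the paper picks such a $t_0\le t_1$ by Fubini rather than using $t_1$ itself. For genuinely local data, cut off a vector potential as in the paper's Step 1: $\nabla\times(\rho v)$ agrees with $\E_0$ exactly on $B_{R'}$. The local case is the meaningful one. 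Read literally, an $H^m(\R^4)$ solution vanishes, since its spacetime Fourier transform is supported on the Lebesgue-null set $\tau^2=|\xi|^2$.

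\textbf{Comparison.} With these repairs your route is a valid alternative to the paper's.

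\emph{Frequency approximation.} You work with the Fourier transform on $\R^3$, so the bad plane is simply discarded as a null set, at the cost of a discretization step. The paper works with a Fourier series on a cube. That yields finite sums directly, but needs a compactly supported divergence-free extension. It also needs an explicit $\delta$-perturbation of the lattice frequencies with $k_1=0$ that preserves divergence-freeness, because those frequencies carry positive weight.

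\emph{Energy estimate.} Your backward-cone energy inequality is the form the paper's Step 5 actually needs. Network outputs are not compactly supported, so the global conservation argument there really requires your domain-of-dependence version.

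\emph{Activation replacement.} This matches the paper's Step 4. In both, any zero inner weight must first be perturbed to a small nonzero value, since $\p_i(\boldsymbol{0})=\boldsymbol{0}$.

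The Hahn--Banach fallback is not needed.
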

In particular, for $C^0$-approximations, convergence is uniform. Moreover, if we have a smooth solution $(\E,\B)\in C^\infty(\R^4,\R^6)$, we obtain an approximation in the Frech\'et topology on $C^\infty((t_1,t_2)\times\Omega)$.

The fact that functions in~\eqref{eq:model} satisfy Maxwell's equations~\eqref{eq:maxwell} follows by construction, since we use the solvepde command in Macaulay2 \cite{macaulay2}. We will prove this directly in the sequel.

Theorem~\ref{thm:UAP} follows from the next statement which incorporates boundary conditions. We will need to define the set
$$
\Omega_T=\{\x\in\R^3\colon \mathrm{dist}(\x,\Omega)<T\},
$$
which is the region of $\R$ where initial conditions uniquely determine the solution of~\eqref{eq:maxwell}  up to time $T$. This is an aspect of the so-called \textit{finite speed of propagation} property \cite[Theorem 3.12]{Lax2006}.
\begin{theorem}[Universal approximation for Maxwell's equations with initial conditions]
\label{thm:UAP_IC}
Let $m\geq0$ be an integer, {$\sigma\in C^m(\R)$ be non-polynomial}, $T>0$, and $\Omega\subset\R^3$ be a convex bounded open   set.
Let $\E_0,\B_0\in H^m(\Omega_T,\R^3)$ be divergence-free. Let $(\E_\gt,\B_\gt)$ be the unique solution in $H^m((0,T)\times\Omega,\R^6)$ of the homogeneous Maxwell equations~\eqref{eq:maxwell}.
Then, for every $\varepsilon>0$, there exists a finite Maxwell network solution $(\E,\B)$ of the form~\eqref{eq:model} such that
\begin{align*}
    \|(\E_0,\B_0)(0,\cdot)-(\E,\B)(0,\cdot))\|_{H^m(\Omega_T)}<\varepsilon,
\end{align*}
so that we can further ensure
\[
\|( \E_\gt, \B_\gt)(t,\cdot)-(\E,\B)(t,\cdot)\|_{H^m((0,T)\times\Omega)}<\varepsilon.
\]
If $m>1$, the second estimate holds true with the $H^m$-norm replaced by the $C^{m-2}$-norm. 
\end{theorem}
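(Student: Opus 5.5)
The proof has two parts: an approximation result for divergence-free initial data on $\Omega_T$, and a propagation estimate. For the propagation estimate we use linearity and finite speed of propagation.

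\textbf{Step 1 (reduction to initial data).} Let $(\E,\B)$ be any network of the form~\eqref{eq:model}. By Lemma~\ref{lem:solution} it solves~\eqref{eq:maxwell} exactly, so the difference $(\E_\gt-\E,\B_\gt-\B)$ also solves~\eqref{eq:maxwell}. Its initial data on $\Omega_T$ is the initial approximation error. Maxwell's system is symmetric hyperbolic and conserves the energy $\tfrac12(|\E|^2+|\B|^2)$. The local energy inequality on the truncated cone $\{(t,\x)\colon \mathrm{dist}(\x,\Omega)<T-t\}$ (cf.\ \cite[Theorem 3.12]{Lax2006}) then bounds $\|(\E_\gt-\E,\B_\gt-\B)(t,\cdot)\|_{L^2(\Omega)}$ by the $L^2(\Omega_T)$-norm of the initial error, for every $t\in(0,T)$.

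For higher $m$ we apply the same argument to $\partial^\alpha$ of the difference, which is again a solution. Spatial derivatives are controlled directly by the $H^m(\Omega_T)$ initial error. Time derivatives are converted into spatial derivatives through the equations, since $\partial_t$ equals a curl. Integrating in $t$ gives the spacetime $H^m((0,T)\times\Omega)$ bound with a constant $C(T,m)$. We therefore aim for initial accuracy $\varepsilon/C$. The $C^{m-2}$ statement follows from Sobolev embedding on the Lipschitz (convex) domain $(0,T)\times\Omega$ in $\R^4$, because $H^m\hookrightarrow C^{m-2}$ when $m>2$. For $m=2$ this embedding only holds in a borderline sense, so I would instead use fixed-time $H^m(\Omega)\hookrightarrow C^{m-2}$ in $\R^3$ together with the uniform-in-$t$ bounds.

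\textbf{Step 2 (approximating divergence-free initial data).} At $t=0$ each neuron is $\sigma(\z'\cdot\x+b)\p_i(\z)$, a ridge function in $\R^3$ with direction $\z'=(z_1,z_2,z_3)$. Its vector amplitude $\p_i(z_0,\z')$, with $z_0=\pm|\z'|$, satisfies $\z'\cdot\p_i^{E}=0$ and $\z'\cdot\p_i^{B}=0$. The plan has three parts.

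(a) Extend $(\E_0,\B_0)$ to compactly supported divergence-free fields on $\R^3$ that agree on $\Omega_T$. One route is the Helmholtz/potential representation $\E_0=\nabla\times\boldsymbol{A}$ on the convex set $\Omega_T$, followed by a cutoff applied to $\boldsymbol A$. We then mollify.

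(b) Write the smooth compactly supported field via the Fourier transform as a superposition of plane waves $e^{i\boldsymbol\xi\cdot\x}\hat{\E}(\boldsymbol\xi)$ with $\hat\E(\boldsymbol\xi)\perp\boldsymbol\xi$. The vectors $\p_1,\p_2$, taken with both signs $j=\pm1$, span the six-dimensional admissible space $\{(\boldsymbol e,\boldsymbol b)\colon \boldsymbol e\perp\boldsymbol\xi,\ \boldsymbol b\perp\boldsymbol\xi\}$ for generic $\boldsymbol\xi$. One checks that $\p_1,\p_2$ are independent off a null set, and that $j=\pm$ gives $\B=\pm\,\hat{\boldsymbol\xi}\times\E$-type pairings covering all combinations. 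Away from this null set the coefficients are smooth, bounded functions of $\boldsymbol\xi$. Truncating the Fourier integral and discretising it by Riemann sums then gives a finite sum of divergence-free trigonometric ridge functions $\cos/\sin(\boldsymbol\xi\cdot\x)\,\p_i(\pm|\boldsymbol\xi|,\boldsymbol\xi)$ that is $H^m(\Omega_T)$-close to the data.

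(c) Approximate each one-dimensional profile $\cos(s)$ on a bounded interval, in $C^m$, by finite sums $\sum_\ell a_\ell\sigma(\lambda_\ell s+b_\ell)$. This uses the classical one-dimensional density of $\mathrm{span}\{\sigma(\lambda s+b)\}$ in $C^m$ on compacts for non-polynomial $\sigma\in C^m$ (Leshno--Pinkus / Pinkus' $C^m$ version). Scaling $\boldsymbol\xi\mapsto\lambda\boldsymbol\xi$ keeps the cone constraint, and $\p_i$ is homogeneous of degree $2$, so rescaled neurons remain of the form~\eqref{eq:model}; the factor $\lambda^{2}$ is absorbed into $w$.

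\textbf{Main obstacle.} The delicate point is (a)/(b): producing a divergence-free extension and making sure the degenerate set of $\boldsymbol\xi$, where $\p_1,\p_2$ fail to span, is harmless. If the spanning argument is awkward, my first fallback is to exclude a small conic neighbourhood of the bad directions. Its Fourier mass is small in $H^m$, and rotations of coordinates applied to the remaining multipliers cover it. The propagation step is comparatively standard energy-estimate work.
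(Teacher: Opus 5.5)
Your route is valid and differs from the paper's mainly in how you approximate the initial data, but several statements need fixing.

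\textbf{Comparison.} The paper expands the extended, compactly supported data in a Fourier \emph{series} on $(0,2\pi)^3$. It then perturbs every mode with $k_1=0$ to $(\delta,k_2,k_3)$, with an explicit $C^m$ estimate, because $P(\xi)=(\p_1(\pm|\xi|,\xi),\p_2(\pm|\xi|,\xi))$ has image $\ker R(\xi)$ only for $\xi_1\neq0$.

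Your ``null set'' is exactly this plane. The $\E$-parts of $\p_1,\p_2$ are $|\xi|^2\Pi_{\xi^\perp}e_3$ and $-|\xi|^2\Pi_{\xi^\perp}e_2$, which are dependent iff $\xi\in\mathrm{span}(e_2,e_3)$. In a Fourier series the modes with $k_1=0$ can carry a non-negligible part of the data, hence the paper's perturbation lemma. With your Fourier \emph{transform} of a mollified compactly supported extension, the plane is Lebesgue-null, and you can simply choose Riemann-sum tags with $\xi_{\ell,1}\neq0$. Each discretised term $\mathrm{Re}(\hat{\E}(\xi_\ell)e^{i\xi_\ell\cdot\x})$, together with its $\B$ counterpart, is then \emph{exactly} a combination of $\cos(\xi_\ell\cdot\x)$ and $\sin(\xi_\ell\cdot\x)$ times the vectors $\p_i(\pm|\xi_\ell|,\xi_\ell)$. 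In (c) you approximate $\cos$ multiplying a fixed amplitude vector, so the size of the coefficients never matters.

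That is fortunate, because your claim that the coefficients are bounded off the null set is false. They blow up as $\xi_1\to0$, since the two projected vectors become parallel, and as $\xi\to0$, by degree-2 homogeneity. So your ``main obstacle'' disappears once you discretise before decomposing. Drop the fallback, in particular the rotation of multipliers, which is not available because $\p_1,\p_2$ are fixed in \eqref{eq:model}.

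The trade-off is that the paper's series needs no mollification or Riemann-sum argument, while yours needs no perturbation lemma. Your propagation step is also the better-justified one. The paper integrates the global energy identity using compact support, which the difference with a network lacks, since its neurons are ridge functions. The estimate actually used, $\sup_{t\in(0,T)}\|\boldsymbol u(t,\cdot)\|_{H^m(\Omega)}\le\|\boldsymbol u(0,\cdot)\|_{H^m(\Omega_T)}$ for a solution $\boldsymbol u$, is precisely your truncated-cone energy inequality.

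\textbf{Corrections.}

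(i) The admissible space $\{(\boldsymbol e,\boldsymbol b)\colon \boldsymbol e,\boldsymbol b\perp\xi\}$ is four-dimensional, not six; four vectors could not span six dimensions. The right statement is the paper's $\mathrm{im}\,P(\xi)=\ker R(\xi)$ for $\xi_1\neq0$.

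(ii) The $C^{m-2}$ step fails as written. On a domain in $\R^4$, $H^m\hookrightarrow C^{m-2}$ would require $m>(m-2)+2$, so it is borderline and false for \emph{every} $m$, not just $m=2$. Use your fixed-time argument for all $m\geq2$:
\begin{itemize}
\item rewrite $\partial_t^a\partial^\alpha$ as spatial derivatives via the equations;
\item apply $H^m(\Omega)\hookrightarrow C^{m-2}(\overline{\Omega})$ in $\R^3$, which holds since $m>(m-2)+3/2$;
\item finish with the uniform-in-$t$ energy bound, as the paper does.
\end{itemize}

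(iii) In (a), the potential must be extended from $\Omega_T$ to $\R^3$ before the cutoff, e.g.\ by Stein extension since $\Omega_T$ is convex. It must also lie in $H^{m+1}$ so that $\nabla\times(\rho\boldsymbol A)\in H^m$.

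(iv) In (c), homogeneity of $\p_i$ in $\z$ gives $\p_i(j|\lambda\xi|,\lambda\xi)=\lambda^2\p_i(-j|\xi|,\xi)$ for $\lambda<0$. So the sign $j$ flips, which is harmless because both signs are in the model. Any term with $\lambda=0$ must be replaced by one with small $\lambda\neq0$, since $\p_i(\boldsymbol 0)=\boldsymbol 0$.
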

In particular, given $C^\infty$ initial conditions, we obtain approximations in the Frech\'et topology on $C^\infty((0,T)\times\Omega)$.

Before we prove this theorem, we need to ensure that our model makes sense, i.e.\ that~\eqref{eq:model} indeed generates solutions of Maxwell's system~\eqref{eq:maxwell}.

\begin{lemma}\label{lem:solution}
   ~\eqref{eq:model} gives solutions to Maxwell's equations~\eqref{eq:maxwell}.
\end{lemma}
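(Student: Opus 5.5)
The plan is to reduce the statement to a single neuron and then to a polynomial identity in the frequency vector. Since the operator $A(\partial)$ from the definition of $L^2$-solutions is linear with constant coefficients, and \eqref{eq:model} is a finite linear combination, it suffices to show that for each $i\in\{1,2\}$, each $b\in\R$ and each $\z=(z_0,z_1,z_2,z_3)$ satisfying \eqref{eq:char_var}, the single neuron $u(\x)=\sigma(\x^\top\z+b)\,\p_i(\z)$ satisfies $A(\partial)u=\boldsymbol 0$.

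First assume $\sigma\in C^1$. By the chain rule, $\partial_\mu u(\x)=z_\mu\,\sigma'(\x^\top\z+b)\,\p_i(\z)$ for $\mu=0,\dots,3$. Hence
\[
A(\partial)u(\x)=\sigma'(\x^\top\z+b)\,A(\z)\p_i(\z),
\]
where $A(\z)$ is the symbol of the operator: replace $\partial_t$ by $z_0$ and $\nabla$ by $\z'=(z_1,z_2,z_3)$. Everything therefore reduces to the algebraic claim $A(\z)\p_i(\z)=\boldsymbol 0$ whenever $z_0^2=|\z'|^2$. This is exactly what the Macaulay2 computation certifies (the $\p_i$ are Noetherian multipliers for the prime ideal of the light cone), but I would verify it by hand, componentwise.

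For $\p_1$, write $\E=(-z_1z_3,-z_2z_3,z_0^2-z_3^2)$ and $\B=(-z_0z_2,z_0z_1,0)$.
\begin{itemize}
\item $\z'\cdot\B=-z_0z_1z_2+z_0z_1z_2=0$ identically.
\item $\z'\cdot\E=z_3\bigl(z_0^2-|\z'|^2\bigr)$, which vanishes on the cone.
\item $z_0\E-\z'\times\B$: its first two components cancel identically, and the third equals $z_0\bigl(z_0^2-|\z'|^2\bigr)$, which vanishes on the cone.
\item $z_0\B+\z'\times\E$: each component is a cubic that should reduce to a multiple of $z_0^2-|\z'|^2$.
\end{itemize}
The same bookkeeping applies to $\p_2$, which is $\p_1$ with the roles of the spatial axes permuted and signs adjusted. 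I expect every component of $A(\z)\p_i(\z)$ to be either identically zero or a polynomial multiple of $z_0^2-z_1^2-z_2^2-z_3^2$. This vanishes for both choices $j=\pm1$ in \eqref{eq:char_var}, since only $z_0^2$ enters the constraint.

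The main obstacle is low regularity of the activation: the paper only assumes $\sigma$ Lipschitz (e.g.\ ReLU appears in the ablations), so $\sigma'$ may exist only almost everywhere. To handle this I would interpret the conclusion distributionally. Let $\sigma_\delta=\sigma*\rho_\delta$ be a mollification of $\sigma$. Then the $C^\infty$ neurons $u_\delta$ are classical solutions by the argument above. Moreover $\sigma_\delta\to\sigma$ locally uniformly, so $u_\delta\to u$ in $L^2_{\mathrm{loc}}$. Passing to the limit in $\int u_\delta^\top A(\partial)^\top\varphi\,d\x=0$ for every test function $\varphi\in C_c^\infty$ gives that $u$ is an $L^2_{\mathrm{loc}}$ distributional solution. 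Alternatively, Lipschitz $\sigma$ makes $u$ locally $H^1$ with weak derivative $z_\mu\sigma'(\cdot)\p_i$, and the identity then holds almost everywhere. Summing over $i,j,k$ concludes the lemma.
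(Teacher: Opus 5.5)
Your proposal is correct and takes the same route as the paper: linearity reduces the claim to a single neuron, and the chain rule reduces it to the symbol identity $A(\z)\p_i(\z)=\boldsymbol 0$ on the cone. The only difference is bookkeeping: the paper rewrites $\p_1(\z)=\bigl(-\xi\times(\xi\times e_3),\,-z_0(\xi\times e_3)\bigr)$ with $\xi=\z'$ (and $\p_2$ analogously with $e_2$) and then invokes vector identities, while you check components directly; your fourth bullet in fact vanishes identically, since $\z'\times\E=(z_0^2z_2,-z_0^2z_1,0)=-z_0\B$. Your mollification argument for merely Lipschitz $\sigma$ is a genuine addition, because the paper's proof simply differentiates $\sigma$ and never addresses non-$C^1$ activations such as ReLU.
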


\begin{proof}
    By linearity of the equations, it suffices to verify this claim for 
    $$
    (\E(\x),\B(\x))=\sigma(\x^\top\z)\p_i(\z)
    $$
    where 
    $$z_{0}=\pm\sqrt{z_{1}^2+z_{2}^2+z_{3}^2}$$ and
\begin{align*}
   \begin{split}
    &\p_1(\z)=(- z_1z_3, -z_2z_3,z_1^2+z_2^2,-z_0z_2,z_0z_1,0)\\
    &\p_2(\z)=(z_1z_2,-z_1^2-z_3^2,z_2z_3,-z_0z_3,0,z_0z_1).
    \end{split}
\end{align*}
We thus write $z^\pm:=(\pm|\xi|,\xi)$ for $\xi\in\R^3$, so that a direct computation gives
\[
\p_1(z^\pm)
=
\bigl(-\xi\times(\xi\times e_3),\ \mp |\xi|(\xi\times e_3)\bigr),
\qquad
\p_2(z^\pm)
=
\bigl(\xi\times(\xi\times e_2),\ \pm |\xi|(\xi\times e_2)\bigr),
\]

First let $i=1$. By the chain rule, we obtain
\begin{align*}
    &\partial_t\E-\nabla\times \B=\sigma'(\x^\top z^\pm)(\pm|\xi|(-\xi\times(\xi\times e_3))-\xi\times(\mp |\xi|(\xi\times e_3))=\boldsymbol{0}\\
    &\partial_t\B+\nabla\times \E=\sigma'(\x^\top z^\pm)(\pm|\xi|(\mp |\xi|(\xi\times e_3)+\xi\times(-\xi\times(\xi\times e_3)))=\boldsymbol{0}\\
    &\nabla\cdot \E=\sigma'(\x^\top z^\pm)(-\xi\times(\xi\times e_3))\cdot\xi=\boldsymbol{0}\\
    &\nabla\cdot \B=\sigma'(\x^\top z^\pm)(\mp |\xi|(\xi\times e_3)\cdot\xi=\boldsymbol{0},
\end{align*}
where all calculations follow by elementary vector calculus identities. The case $i=2$ is similar.
\end{proof}

\begin{proof}[Proof of Theorem~\ref{thm:UAP} assuming Theorem~\ref{thm:UAP_IC}]
    If $(\E_\gt,\B_\gt)\in H^m(\R^4)$, then $(\E_\gt(t,\cdot),\B_\gt(t,\cdot))\in H^m(\R^3)$ for almost every $t\in \R$. This follows by Fubini's theorem. Pick such a $t_0\leq t_1$. By translation invariance of linear pde with constant coefficients, we can assume that $t=0$ and $t_2=T$ in the statement of Theorem~\ref{thm:UAP_IC}. By replacing $\Omega$ with its convex hull, we can assume that $\Omega$ is convex. We also let $\E_0=\E_\gt(0,\cdot)$ and $\B_0=\B_\gt(0,\cdot)$. Apply Theorem~\ref{thm:UAP_IC} to conclude.
\end{proof}
Of course, the difficulty is moved to the:
\begin{proof}[Proof of Theorem~\ref{thm:UAP_IC}]
    We first establish the approximation of $\E_0,\B_0$ in $H^m$ using activation function $\sigma=\cos$. Then we extend the approximation to general $\sigma$. Finally, we recall estimates for~\eqref{eq:maxwell} that enable us to control the spacetime error by the error at $t=0$.

    \textit{Step 1. We can assume $\E_0,\B_0\in L^2(\R^3)$ and $\E_0=\boldsymbol{0}=\B_0$ outside a cube $Q\subset\R^3$, without loss of generality:}\\
    By \cite{KatoMitreaPonceTaylor2000}, there exist $\tilde\E_0,\tilde\B_0\in L^2(\R^3)$ that are both divergence-free in $\R^3$ and each equal $\E_0,\B_0$ in $\Omega_T$. By Poincar\'e's lemma, $\tilde\E_0=\nabla\times v_1,\,\tilde\B_0=\nabla\times v_2$ for some $v_1,\,v_2\in H^1(\R^3)$. Consider $\rho\in C_c^\infty(Q)$ such that $\rho=1$ in $\Omega_T$ and $Q\Supset \Omega_T$ is a sufficiently large cube. Then $e_0=\nabla\times(\rho v_1)$, $b_0=\nabla\times(\rho v_2)$ are divergence-free in $\R^3$, compactly supported inside $Q$, equal $\E_0,\B_0$ respectively in $\Omega_T$. 
    Let $(e,b)$ be the unique solution in $L^2((0,T)\times\R^3)$ of~\eqref{eq:maxwell} with initial conditions $e_0,b_0$. 
    By the finite speed of propagation property (see also \textit{Step 4} below), we have that $e=\E_\gt$ and $b=\B_\gt$ in $(0,T)\times \Omega$.

    \textit{Step 2. $\E_0,\B_0$ can be approximated in {$H^m$} by Fourier sums with ``good'' frequencies:}\\
    By the translation-, dilation-, and rotation-invariance of~\eqref{eq:maxwell}, we can assume the cube $Q=(0,2\pi)^3$, so that we can write the Fourier series
    \begin{align*}
        \E_0(x,y,z)=\sum_{{\boldsymbol{k}}\in \mathbb Z^3} c_{{\boldsymbol{k}}}\exp(\sqrt{-1}(xk_1+yk_2+zk_3))
    \end{align*}
where $c_{{\boldsymbol{k}}}\in \mathbb C^3$ are complex coefficients, with a similar formula for $\B_0$.
    
    We next use the fact that $\E_0$ is divergence-free to conclude, first, that $c_{\mathbf 0}=\boldsymbol{0}$. This is so since
    $$
    (2\pi)^3c_{\mathbf{0}}=\int_Q \E_0{d}V=\int_Q \nabla(x,y,z)\E_0{d}V=-\int_Q (x,y,z)\nabla \cdot \E_0{d}V=\mathbf{0}.
    $$
    We moreover have
    \begin{align*}
        0=\nabla\cdot \E_0(x,y,z)=\sqrt{-1}\sum_{\mathbf0\neq{\boldsymbol{k}}\in \mathbb Z^3} c_{{\boldsymbol{k}}}\cdot {\boldsymbol{k}}\exp(-\sqrt{-1}(xk_1+yk_2+zk_3)),
    \end{align*}
so that the scalar product $c_{{\boldsymbol{k}}}\cdot{\boldsymbol{k}}=0$ for all $\mathbf0\neq{\boldsymbol{k}}\in \mathbb{Z}^3$ by linear independence over $\R$ of sine and cosine waves. In particular, if $c_{\boldsymbol{k}}=a_{\boldsymbol{k}}+\sqrt{-1}b_{\boldsymbol{k}}$ for $a_{\boldsymbol{k}},b_{\boldsymbol{k}}\in\mathbb R^3$, then $a_{\boldsymbol{k}},b_{\boldsymbol{k}}\in {\boldsymbol{k}}^\perp$ for any $\mathbf0\neq{\boldsymbol{k}}\in \mathbb{Z}^3$. After taking real part we are left with
\begin{align*}
        \E_0(x,y,z)=\sum_{\mathbf0\neq{\boldsymbol{k}}\in \mathbb Z^3} a_{{\boldsymbol{k}}}\cos(xk_1+yk_2+zk_3)-b_{{\boldsymbol{k}}}\sin(xk_1+yk_2+zk_3),
    \end{align*}
and $\sin$ is a shift of $\cos$ by $\pi/2$. We have a similar formula for $B_0$,
\begin{align*}
        \B_0(x,y,z)=\sum_{\mathbf0\neq{\boldsymbol{k}}\in \mathbb Z^3} \tilde a_{{\boldsymbol{k}}}\cos(xk_1+yk_2+zk_3)-\tilde b_{{\boldsymbol{k}}}\sin(xk_1+yk_2+zk_3),
    \end{align*}
with $\tilde a_{\boldsymbol{k}},\tilde b_{\boldsymbol{k}}\in {\boldsymbol{k}}^\perp$. We denote by $\E_{0j},\B_{0j}$ the partial sums of $\E_0,\B_0$ above, truncated at $|{\boldsymbol{k}}|\leq j$. Thus $\E_{0j},\B_{0j}$ are smooth divergence-free fields, which converge respectively to $\E_0,\B_0$ in $H^m(Q,\R^3)$ as $j\to\infty$.

We next approximate $\E_{0j},\B_{0j}$ with similar trigonometric polynomials such that no frequency used has $k_1=0$.\footnote{The reason why this is necessary is far from transparent and will become clear in \textit{Step 3}. In short, the vectors $\p_i(\z)\in\R^6$ from~\eqref{eq:noetherian_multipliers} contain too many zeroes at $\z\in\R^4$ with $\z_1=0$.}
To this end, we will show that any derivative $\partial^\alpha$, $|\alpha|\leq m$ of the term $a\cos(k_2y+k_3z)$ with $a\in\R^3$ and $k_2^2+k_3^2\leq j^2$ can be approximated uniformly with a good frequency. Note that  $a_2k_2+a_3k_3=0$ and one of $k_2,k_3$ must be non-zero. Suppose without loss of generality that $k_2\neq 0$. Note that for any $\delta\in(0,1)$
$$
  \nabla \cdot[  (a_1,a_2-a_1\delta/k_2,a_3)^\top\cos(\delta x+ k_2y+k_3z)]=0.
$$
We aim to estimate for $\alpha\in\mathbb N^3$ with $|\alpha|\leq m$ the difference
\begin{align*}
    \textbf{I}_\alpha=|\partial^\alpha[(a_1,a_2-a_1\delta/k_2,a_3)^\top\cos(\delta x+ k_2y+k_3z)-a\cos( k_2y+k_3z)] |.
\end{align*}
To do so, we introduce a bit of notation. We write $f_k$ for the $k$th derivative of $f$. We write $v_1=(\delta,k_2,k_3)^\top$, $v_2=(0,k_2,k_3)^\top$, $\vec{x}=(x,y,z)^\top$, and $\tilde a=(a_1,a_2-a_1\delta/k_2,a_3)^\top$. We then have by chain rule
\begin{align*}
\mathbf{I}_\alpha &=|\partial^\alpha[\tilde a\cos(v_1^\top \vec{x})-a\cos( v_2^\top \vec{x})] |\\
&=|v_1^\alpha \tilde a \cos_{|\alpha|}(v_1^\top \vec{x})-v_2^\alpha a\cos_{|\alpha|}(v_2^\top \vec{x})|\\
&\leq|v_1^\alpha \tilde a \cos_{|\alpha|}(v_1^\top \vec{x})-v_1^\alpha  a \cos_{|\alpha|}(v_1^\top \vec{x})|+|v_1^\alpha  a \cos_{|\alpha|}(v_1^\top \vec{x})-v_2^\alpha  a \cos_{|\alpha|}(v_1^\top \vec{x})|\\
&\phantom{leq}+|v_2^\alpha  a \cos_{|\alpha|}(v_1^\top \vec{x})-v_2^\alpha a\cos_{|\alpha|}(v_2^\top \vec{x})|\\
&\leq |v_1^\alpha||\tilde a-a|+|v_1-v_2||a|+|v_2^\alpha||a||\delta x|\\
&\leq j^m|a|\delta+\delta |a|+2\pi\delta |a||j|^m\\
&\leq 10\delta |a|j^m
\end{align*}
where we used that $|k_2|\geq1$ and the fact that $\cos$ is 1-Lipschitz. It follows that
$$
\|\tilde a\cos(v_1^\top \cdot)-a\cos( v_2^\top \cdot)\|_{C^m(Q)}\leq C_0(m)\delta|a|j^m;
$$
In particular,  
 replacing the  terms in the \textit{finite} expansions of $\E_{0j},\B_{0j}$ with frequencies $\boldsymbol{k}$ with $k_1=0$  as above by choosing $\delta $  small enough, we can obtain
$$
\|(\E_{0j},\B_{0j})-(\tilde \E_{0j},\tilde \B_{0j})\|_{C^m(Q)}\leq 1/j,
$$
where
$$
(\tilde \E_{0j},\tilde \B_{0j})^\top=\sum_{\xi\in F_j}A_{\xi}\cos(\xi_1 x+\xi_2y+\xi_3z)+B_{\xi}\sin(\xi_1 x+\xi_2y+\xi_3z)
$$
with $\xi_1\neq 0$ for all $\xi$ in a finite set $F_j\subset\R^3\setminus\{\boldsymbol{0}\}$ and $A_{\xi},B_{\xi}\in \ker R({\xi})$, where
$$
R(\xi)=\left(\begin{matrix}
    \xi_1&\xi_2&\xi_3&0&0&0\\
    0&0&0&\xi_1&\xi_2&\xi_3
\end{matrix}\right).
$$
Then $(\tilde \E_{0j},\tilde \B_{0j})$ give the desired finite approximations of $(\E_0,\B_0)$ in $H^m(Q,\R^6)$.

 \textit{Step 3. $(\E_0,\B_0)$ can be approximated in $H^m$ by~\eqref{eq:model} with $\sigma=\cos$ at $t=0$:}\\
To achieve this, we will express $(\tilde \E_{0j},\tilde \B_{0j})$ in terms of 
  $$
    P(\xi)=(p_1(|\xi|,\xi)^\top,\,p_2(|\xi|,\xi)^\top,\,p_1(-|\xi|,\xi)^\top,\,p_2(-|\xi|,\xi)^\top)\in\R^{6\times 4}.
    $$
    with $p_i$ as in~\eqref{eq:noetherian_multipliers}.

    We claim that $\mathrm{im\,} P(\xi)=\ker R(\xi)$ if $\xi_1\neq0$.  To this end, we note that the inclusion $\subset$ follows from a simple computation. Once we prove that $\ker P({\xi})=\{\boldsymbol0\}$, the rank-nullity theorem implies that $\mathrm{rank\,}P({\xi})=4$, which clearly equals the dimension of $\ker R({\xi})$, yielding the claim. To show that $\ker P({\xi})=\{\boldsymbol0\}$ for $\xi_1\neq0$, we simply write
    $$
   \alpha_1 p_1(|{\xi}|,{\xi})+\alpha_2p_2(|{\xi}|,{\xi})+\alpha_3p_1(-|{\xi}|,{\xi})+\alpha_4p_2(-|{\xi}|,{\xi})=\mathbf 0.
    $$
    Equation~\eqref{eq:noetherian_multipliers} implies $\alpha_1=\alpha_3$, $\alpha_2=\alpha_4$. If the $(1,1),(3,2)$-minor is nonzero, we are done. Else, $\xi_2=0$ and then the $(2,1),(3,2)$ minor is nonzero, which proves the claim.


    Let 
    \begin{align*}
            v_j(x,y,z)=\sum_{|{\xi}|\in F_j}&[P({\xi})^\top P({\xi})]^{-1}P({\xi})^\top A_{{\xi}}\cos(x\xi_1+y\xi_2+z\xi_3)\\
            &+[P({\xi})^\top P({\xi})]^{-1}P({\xi})^\top B_{{\xi}}\sin(x\xi_1+y\xi_2+z\xi_3),
    \end{align*}
    where the left-inverse is actually an inverse because $\ker R(\xi)=\mathrm{im\,}P(\xi)$ for $\xi_1\neq 0$. This implies that 
    \begin{align}\label{eq:sum}
    \begin{split}
    \sum_{|{\xi}|\in F_j}&P({\xi})[P({\xi})^\top P({\xi})]^{-1}P({\xi})^\top A_{{\xi}}\cos(x\xi_1+y\xi_2+z\xi_3)\\
            &+P({\xi})[P({\xi})^\top P({\xi})]^{-1}P({\xi})^\top B_{\xi}\sin(x\xi_1+y\xi_2+z\xi_3)=(\tilde \E_{0j},\tilde \B_{0j})^\top(x,y,z),
            \end{split}
    \end{align}
    which means that indeed arbitrary divergence-free initial conditions can be approximated in $H^m$ using~\eqref{eq:model} with activation function $\sigma=\cos$. 

       \textit{Step 4. General activation function for $H^m$-approximation at $t=0$:}\\
     Because the sum in~\eqref{eq:sum} is finite, it suffices to approximate a single term
     $$
     v(\vec{x})=P(\xi)A \cos(\xi^\top\vec{x}+b)
     $$
     in $H^m(Q)$ using an arbitrary activation $\sigma$. In particular, by our assumption on $\sigma$ not being a polynomial, we can use \cite{hornik1990universal} to find parameters $w_j,\,z_j,\,b_j\in\R$ such that
     $$
     \left\|\cos-\sum_{j=1}^Ww_j\sigma(z_j\cdot+b_j)\right\|_{H^m(-4\pi|\xi|+b,4\pi|\xi|+b)}<\delta.
     $$
     Our approximation for $v$ will then be 
     $$\tilde v(\vec{x})=\sum_{j=1}^Ww_jP(\xi)A\sigma(z_j\xi^\top\vec x+z_jb+b_j)=\sum_{j=1}^WP(z_j\xi)(w_jz_j^{-2}A)\sigma(z_j\xi^\top\vec x+\underbrace{z_jb+b_j}_{B_j}),$$ 
so $\tilde v$ is in the form~\eqref{eq:model} at $t=0$. We will next show that we can control the $\alpha$-derivative of the error in approximating $v$ with the error of approximating the $\alpha$-derivative of $\cos$. Let $\alpha\in\mathbb N^3$.
\begin{align*}
    &\|\partial^\alpha(v-\tilde v)\|_{L^2(Q)}^2\\
    &=\int_Q \left|\xi^\alpha P(\xi)A \left(\cos_{|\alpha|}(\xi^\top \vec{x}+b)-\sum_{j=1}^Ww_jz_j^{|\alpha|}\sigma_{|\alpha|}(z_j\xi^\top \vec{x}+z_jb+b_j)\right)\right|^2d V\\
    &\leq |\xi^\alpha P(\xi)A|^2\int_{(-4\pi,4\pi)^3}\left|\cos_{|\alpha|}(|\xi|t_1+b)-\sum_{j=1}^Ww_jz_j^{|\alpha|}\sigma_{|\alpha|}(z_j(|\xi|t_1+ b)+b_j)\right|^2dt_1dt_2dt_3\\
    &\leq 64\pi^2|\xi^\alpha P(\xi)A|^2|\xi|^{-1}\int_{-4\pi|\xi|+b}^{4\pi|\xi|+b}\left|\cos_{|\alpha|}(s)-\sum_{j=1}^Ww_jz_j^{|\alpha|}\sigma_{|\alpha|}(z_js+b_j)\right|^2ds\\
    &\leq 64\pi^2|\xi^\alpha P(\xi)A|^2|\xi|^{-1}\delta^2.
    \end{align*}
    where the first change of variable is a rotation making $\xi$ the direction of the $t_1$ axis ($4\pi$ is a generous bound to ensure that the rotation of $Q$ is contained in $(-4\pi,4\pi)^3$) and the last inequality follows from the approximation of cosine. By summing these estimates of $\partial^\alpha(v-\tilde v)$ for $|\alpha|\leq m$, we can conclude that $\|v-\tilde v\|_{H^m(Q)}$ can be made arbitrarily small.

    \textit{Step 5. Dependence of solutions of Maxwell's equations  on initial conditions in $H^m$ or $C^m$:}\\
    Let $(\E,\B)\in H^m$ be a solution of~\eqref{eq:maxwell}. By direct calculation we obtain the identity
    $$
    \partial_t(|\E|^2+|\B|^2)=2\nabla \cdot (\B\times  \E).
    $$
    By integrating in space we obtain
    $$
    \partial_t\int_{\R^3}|\E|^2+|\B|^2{d}V=2\int_{\R^3}\nabla\cdot(\B\times \E){d}V=0,
    $$
    where the latter equality follows from the divergence theorem and the fact that $\B(t,\cdot),\E(t,\cdot)$ are compactly supported for all $t>0$ (another consequence of the finite speed of propagation).

    It follows that the $L^2$-energy on the right hand side is constant in time, so 
    \begin{align*}
         \int_{\R^3}|\E|^2+|\B|^2{d}V= \int_{\R^3}|\E(0,\cdot)|^2+|\B(0,\cdot)|^2{d}V=\int_{\R^3}|\E_0|^2+|\B_0|^2{d}V.
    \end{align*}
    In particular,
    \begin{align}\label{eq:continuous_dependence_L2}
        \sup_{t\in(0,T)}\int_{\R^3}|\E(t,\cdot)|^2+|\B(t,\cdot)|^2{d}V\leq \int_{\R^3}|\E_0|^2+|\B_0|^2{d}V.
    \end{align}

     Since all derivatives of $(\E,\B)$ are also solutions of~\eqref{eq:maxwell}, we can consider $\alpha\in\mathbb  N^3$ with $|\alpha|\leq m$ and only spatial derivatives $\partial=(\partial_x,\partial_y,\partial_z)$ to obtain directly from~\eqref{eq:continuous_dependence_L2} that
     \begin{align*}
        \sup_{t\in(0,T)}\int_{\R^3}|\partial^\alpha\E(t,\cdot)|^2+|\partial^\alpha\B(t,\cdot)|^2{d}V\leq \int_{\R^3}|\partial^\alpha\E_0|^2+|\partial^\alpha\B_0|^2{d}V,
    \end{align*}
    which implies
    \begin{align}\label{eq:continuous_dependence_Hm}
        \sup_{t\in(0,T)}\|(\E,\B)(t,\cdot)\|_{H^m(\Omega)}\leq \|(\E_0,\B_0)\|_{H^m(\Omega_T)}.
    \end{align}
We next incorporate time derivatives. Let $a\in\mathbb N$ and $\alpha\in\mathbb N^{3}$ be such that $a+|\alpha|\leq m$. We claim that $\partial_t^a\partial^\alpha (\E,\B)$ can be expressed only in terms of spatial derivatives of $(\E,\B)$. To this end, we note that by direct calculation we have for integers $j\geq 0$
$$
\partial_t^{2j}(\E,\B)=\Delta^j(\E,\B)\quad\text{so}\quad\partial_t^{2j+1}(\E,\B)=\Delta(\nabla\times \B,-\nabla\times \E),
$$
where $\Delta=\partial_x^2+\partial_y^2+\partial_z^2$. Therefore we can express 
\begin{align*}
    \partial_t^a\partial^\alpha (\E,\B)=A_{a,\alpha}(\partial)(\E,\B),
\end{align*}
where $A_{a,\alpha}$ is a linear partial differential operator in spatial variables only with constant coefficients and having order $a+|\alpha|\leq m$. In particular we can estimate
\begin{align*}
    \|\partial_t^a\partial^\alpha(\E,\B)\|_{L^2((0,T)\times\Omega)}&=\|A_{a,\alpha}(\partial)(\E,\B)\|_{L^2((0,T)\times\Omega)}\\
    &\leq \sqrt{T}\sup_{t\in(0,T)}\|A_{a,\alpha}(\partial)(\E,\B)(t,\cdot)\|_{L^2(\Omega)}\\
    &\leq C_1(m)\sqrt{T}\sup_{t\in(0,T)}\|(\E,\B)(t,\cdot)\|_{H^m(\Omega)}\\
    &\leq C_1(m)\sqrt{T}\|(\E_0,\B_0)\|_{H^m(\Omega_T)}.
\end{align*}
where $C(m)>0$ is a constant depending on $m$ only and the last inequality follows from~\eqref{eq:continuous_dependence_Hm}. Since $a+|\alpha|\leq m$ we can infer that
\begin{align}\label{eq:Hm_estimate}
    \|(\E,\B)\|_{H^m((0,T)\times\Omega)}\leq C_2(m)\sqrt{T}\|(\E_0,\B_0)\|_{H^m(\Omega_T)}
\end{align}
We derive a similar estimate on the $C^m$-scale. We write for $a+|\alpha|\leq m$
\begin{align*}
    \|\partial_t^a\partial^\alpha(\E,\B)\|_{C^0((0,T)\times\Omega)}&=\|A_{a,\alpha}(\partial)(\E,\B)\|_{C^0((0,T)\times\Omega)}\\
    &=\sup_{t\in(0,T)}\|A_{a,\alpha}(\partial)(\E,\B)(t,\cdot)\|_{C^0(\Omega)}\\
    &\leq C_3(m)\sup_{t\in(0,T)}\|(\E,\B)(t,\cdot)\|_{C^m(\Omega)}\\
    &\leq C_1(m,\Omega)\sup_{t\in(0,T)}\|(\E,\B)(t,\cdot)\|_{H^{m+2}(\Omega)}\\
    &\leq C_1(m,\Omega)\|(\E_0,\B_0)\|_{H^{m+2}(\Omega_T)}.
\end{align*}
where the penultimate inequality follows from the Morrey--Sobolev embedding and the last inequality follows from~\eqref{eq:continuous_dependence_Hm}. Finally, we get 
\begin{align}\label{eq:Cm_estimate}
    \|(\E,\B)\|_{C^m((0,T)\times\Omega)}\leq C_2(m,\Omega)\|(\E_0,\B_0)\|_{H^{m+2}(\Omega_T)}.
\end{align}

    \textit{Step 6. Conclusion in $H^m$ spaces:}\\
    We now collect most of the facts above. Let $\varepsilon>0$, $m\geq 0$, and $\E_0,\B_0\in H^m(\Omega_T)$ be divergence-free. Let $\E_\gt,\B_\gt\in H^m((0,T)\times \Omega)$ be the unique solution of Maxwell equations~\eqref{eq:maxwell} with initial conditions $\E_0,\B_0$.
    
    According to \textit{Steps 1-4}, there exists a finite neural network~\eqref{eq:model} which we name here $(\E,\B)$ such that
    \begin{align}\label{eq:IC_approximation_Hm}
        \|(\E_0,\B_0)-(\E,\B)(0,\cdot)\|_{H^m(\Omega)}<\varepsilon.
    \end{align}
    Since by \textit{Step 5}, $(\E,\B)$ is a solution to~\eqref{eq:maxwell}, so is its difference to $(\E_\gt,\B_\gt)$ by linearity. We can thus apply estimate~\eqref{eq:Hm_estimate} from Lemma~\ref{lem:solution} to obtain 
    $$
    \|(\E_\gt,\B_\gt)-(\E,\B)\|_{H^m((0,T)\times \Omega)}<C_1(m)\sqrt T\varepsilon.
    $$
Perhaps by making $\varepsilon$ smaller, we obtain the claim.

    \textit{Step 7. Conclusion in $C^m$ spaces:}\\
Now let $m>1$ and assume that the initial condition is in $C^m(\Omega_T)\subset H^m(\Omega_T)$. Building on \textit{Step 6}, particularly~\eqref{eq:IC_approximation_Hm}, we obtain from~\eqref{eq:Cm_estimate} that 
\begin{align*}
     \|(\E_\gt,\B_\gt)-(\E,\B)\|_{C^{m-2}((0,T)\times \Omega)}<C_2(m,\Omega)\varepsilon.
\end{align*}
This concludes the proof of our main result.
\end{proof}

\newpage

\end{document}